\newif\iffull
\newcommand{\bff}{{\mathbf f}}
\newcommand{\bS}{{\mathbf S}}
\newcommand{\bV}{{\mathbf V}}
\newcommand{\para}[1]{\smallskip \noindent{\bf #1}}
\providecommand{\K}{{\mathcal K}}
\providecommand{\cH}{\mathcal{H}}
\newcommand{\ind}[1]{\mathbf{1}_{\{#1\}}}
\title{Generalization of ERM in Stochastic Convex Optimization: \\The Dimension Strikes Back}
\title{Generalization of ERM in Stochastic Convex Optimization: \\The Dimension Strikes Back\thanks{See \cite{Feldman:16erm} for the full version of this work. }}
\author{
  Vitaly Feldman \\
  IBM Research -- Almaden
}
\date{}
\begin{document}

\maketitle

\begin{abstract}
In stochastic convex optimization the goal is to minimize a convex function $F(x) \doteq \E_{\bff\sim D}[\bff(x)]$ over a convex set $\K \subset \R^d$ where $D$ is some unknown distribution and each $f(\cdot)$ in the support of $D$ is convex over $\K$. The optimization is commonly based on i.i.d.~samples $f^1,f^2,\ldots,f^n$ from $D$.
A standard approach to such problems is empirical risk minimization (ERM) that optimizes $F_S(x) \doteq \fr{n}\sum_{i\leq n} f^i(x)$. Here we consider the question of how many samples are necessary for ERM to succeed and the closely related question of uniform convergence of $F_S$ to $F$ over $\K$. We demonstrate that in the standard $\ell_p/\ell_q$ setting of Lipschitz-bounded functions over a $\K$ of bounded radius, ERM requires sample size that scales linearly with the dimension $d$. This nearly matches standard upper bounds and improves on $\Omega(\log d)$ dependence proved for $\ell_2/\ell_2$ setting in \cite{SSSSS:2009}. In stark contrast, these problems can be solved using dimension-independent number of samples for $\ell_2/\ell_2$ setting and $\log d$ dependence for $\ell_1/\ell_\infty$ setting using other approaches.

We further show that our lower bound applies even if the functions in the support of $D$ are smooth and efficiently computable and even if an $\ell_1$ regularization term is added. Finally, we demonstrate that for a more general class of bounded-range (but not Lipschitz-bounded) stochastic convex programs an infinite gap appears already in dimension 2.
\end{abstract}

\section{Introduction}
Numerous central problems in machine learning, statistics and operations research are special cases of stochastic optimization from i.i.d.~data samples. In this problem the goal is to optimize the value of the expected objective function $F(x) \doteq \E_{\bff\sim D}[\bff(x)]$ over some set $\K$ given i.i.d.~samples $f^1,f^2,\ldots,f^n$ of $\bff$. For example, in supervised learning the set $\K$ consists of hypothesis functions from $Z$ to $Y$ and each sample is an example described by a pair $(z,y) \in (Z,Y)$. For some fixed loss function $L:Y\times Y \rightarrow \R$, an example $(z,y)$ defines a function from $\K$ to $\R$ given by $f_{(z,y)}(h) = L(h(z),y)$. The goal is to find a hypothesis $h$ that (approximately) minimizes the expected loss relative to some distribution $P$ over examples: $\E_{(z,y)\sim P}[L(h(z),y)] = \E_{(z,y)\sim P}[f_{(z,y)}(h)]$.

Here we are interested in stochastic convex optimization (SCO) problems in which $\K$ is some convex subset of $\R^d$ and each function in the support of $D$ is convex over $\K$. The importance of this setting stems from the fact that such problems can be solved efficiently via a large variety of known techniques. Therefore in many applications even if the original optimization problem is not convex, it is replaced by a convex relaxation.



A classic and widely-used approach to solving stochastic optimization problems is empirical risk minimization (ERM) also referred to as stochastic average approximation (SAA) in the optimization literature. In this approach, given a set of samples $S=(f^1,f^2,\ldots,f^n)$ the empirical objective function: $F_S(x) \doteq \fr{n}\sum_{i\leq n} f^i(x)$ is optimized (sometimes with an additional regularization term such as $\lambda \|x\|^2$ for some $\lambda >0$). The question we address here is the number of samples required for this approach to work {\em distribution-independently}. More specifically, for some fixed convex body $\K$ and fixed set of convex functions $\F$ over $\K$, what is the smallest number of samples $n$ such that for every probability distribution $D$ supported on $\F$, any algorithm that minimizes $F_S$ given $n$ i.i.d.~samples from $D$ will produce an $\eps$-optimal solution $\hat{x}$ to the problem (namely, $F(\hat{x}) \leq \min_{x\in \K} F(x) + \eps$) with probability at least $1-\delta$? We will refer to this number as the sample complexity of ERM for $\eps$-optimizing $\F$ over $\K$  (we will fix $\delta =1/2$ for now).

The sample complexity of ERM for $\eps$-optimizing $\F$ over $\K$ is lower bounded by the sample complexity of $\eps$-optimizing $\F$ over $\K$, that is the number of samples that is necessary to find an $\eps$-optimal solution for any algorithm. On the other hand, it is upper bounded by the number of samples that ensures uniform convergence of $F_S$ to $F$. Namely, if with probability $\geq 1-\delta$, for all $x\in \K$, $|F_S(x) - F(x)| \leq \eps/2$ then, clearly, any algorithm based on ERM will succeed. As a result, ERM and uniform convergence are the primary tool for analysis of the sample complexity of learning problems and are the key subject of study in statistical learning theory. Fundamental results in VC theory imply that in some settings, such as binary classification and least-squares regression, uniform convergence is also a necessary condition for learnability (\eg \cite{Vapnik:98,Shalev-ShwartzBen-David:2014}) and therefore the three measures of sample complexity mentioned above nearly coincide.

In the context of stochastic convex optimization the study of sample complexity of ERM and uniform convergence was initiated in a groundbreaking work of Shalev-Shwartz, Shamir, Srebro and Sridharan \cite{SSSSS:2009}. They demonstrated that the relationships between these notions of sample complexity are substantially more delicate even in the most well-studied  settings of SCO. Specifically, let $\K$ be a unit $\ell_2$ ball and $\F$ be the set of all convex sub-differentiable functions with Lipschitz constant relative to $\ell_2$ bounded by 1 or, equivalently, $\| \nabla f(x)\|_2 \leq 1$ for all $x\in \K$. Then, known algorithm for SCO imply that sample complexity of this problem is $O(1/\eps^2)$ and often expressed as $1/\sqrt{n}$ rate of convergence (\eg \citep{Nemirovski:2009,Shalev-ShwartzBen-David:2014}). On the other hand, Shalev-Shwartz \etal \cite{SSSSS:2009} show\footnote{The dependence on $d$ is not stated explicitly but follows  immediately from their analysis.} that the sample complexity of ERM for solving this problem with $\eps = 1/2$ is $\Omega(\log d)$. The only known upper bound for sample complexity of ERM is $\tilde O(d/\eps^2)$ and relies only on the uniform convergence of Lipschitz-bounded functions \cite{ShapiroNemirovsky:05,SSSSS:2009}.


As can seen from this discussion, the work of Shalev-Shwartz \etal \cite{SSSSS:2009} still leaves a major gap between known bounds on sample complexity of ERM (and also uniform convergence) for this basic Lipschitz-bounded $\ell_2/\ell_2$ setup. Another natural question is whether the gap is present in the popular $\ell_1/\ell_\infty$ setup. In this setup $\K$ is a unit $\ell_1$ ball (or in some cases a simplex) and $\| \nabla f(x)\|_\infty \leq 1$ for all $x\in \K$. The sample complexity of SCO in this setup is $\theta(\log d/\eps^2)$ (\eg \citep{Nemirovski:2009,Shalev-ShwartzBen-David:2014}) and therefore, even an appropriately modified lower bound in \cite{SSSSS:2009}, does not imply any gap. More generally, the choice of norm can have a major impact on the relationship between these sample complexities and hence needs to be treated carefully. For example, for (the reversed) $\ell_\infty/\ell_1$ setting the sample complexity of the problem is $\theta(d/\eps^2)$ (\eg \cite{FeldmanGV:15}) and nearly coincides with the number of samples sufficient for uniform convergence.


\subsection{Overview of Results}
In this work we substantially strengthen the lower bound in \cite{SSSSS:2009} proving that a linear dependence on the dimension $d$ is necessary for ERM (and, consequently, uniform convergence). We then extend the lower bound to all $\ell_p/\ell_q$ setups and examine several related questions. Finally, we examine a more general setting of bounded-range SCO (that is $|f(x)| \leq 1$ for all $x\in \K$). While the sample complexity of this setting is still low (for example $\tilde O(1/\eps^2)$ when $\K$ is an $\ell_2$ ball) and efficient algorithms are known, we show that ERM might require an infinite number of samples already for $d=2$.

Our work implies that in SCO, even optimization algorithms that exactly minimize the empirical objective function can produce solutions with generalization error that is much larger than the generalization error of solutions obtained via some standard approaches. Another, somewhat counterintuitive, conclusion from our lower bounds is that, from the point of view of generalization of ERM and uniform convergence, convexity does not reduce the sample complexity in the worst case.

\para{Basic construction:} Our basic construction is fairly simple and its analysis is inspired by the technique in \citep{SSSSS:2009}. It is based on functions of the form $\max\{1/2,\max_{v \in V} \la v ,x \ra\}$. Note that the maximum operator preserves both convexity and Lipschitz bound (relative to any norm). See Figure \ref{fig:basic} for an illustration of such function for $d=2$.
\begin{figure}[h]
\begin{center}
\includegraphics[width=0.6\textwidth]{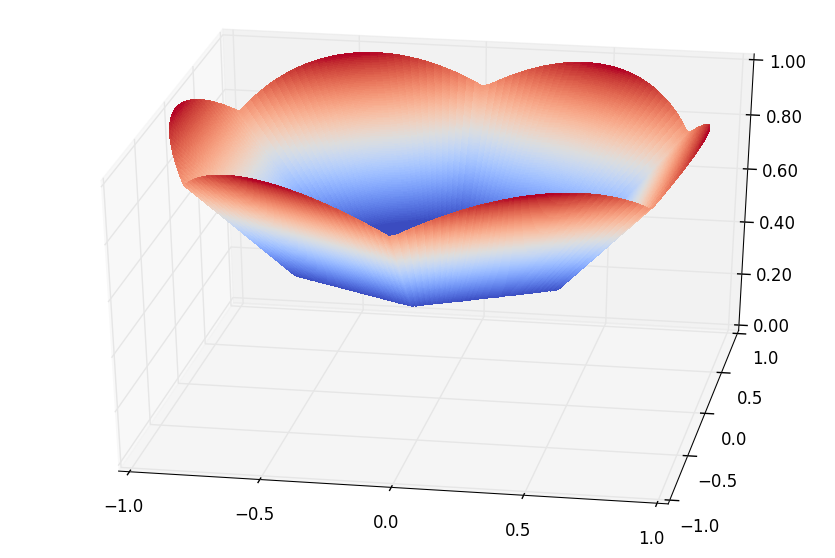}
\end{center}
\caption{Basic construction for $d=2$.}
\label{fig:basic}
\end{figure}

The distribution over the sets $V$ that define such functions is uniform over all subsets of some set of vectors $W$ of size $2^{d/6}$ such that for any two district $u,v \in W$, $\la u,v\ra \leq 1/2$. Equivalently, each element of $W$ is included in $V$ with probability $1/2$ independently of other elements in $W$. This implies that if the number of samples is less than $d/6$ then, with probability $>1/2$, at least one of the vectors in $W$ (say $w$) will not be observed in any of the samples.  This implies that $F_S$ can be minimized while maximizing $\la w ,x \ra$ (the maximum over the unit $\ell_2$ ball is $w$). Note that a function randomly chosen from our distribution includes the term $\la w ,x \ra$ in the maximum operator with probability $1/2$. Therefore the value of the expected function $F$ at $w$ is $3/4$ whereas the minimum of $F$ is $1/2$. In particular, there exists an ERM algorithm with generalization error of at least $1/4$. The details of the construction appear in Sec.~\ref{sec:lower-non-smooth} and Thm.~\ref{thm:l2-max} gives the formal statement of the lower bound. We also show that, by scaling the construction appropriately, we can obtain the same lower bound for any $\ell_p/\ell_q$ setup with $1/p+1/q=1$ (see Thm.~\ref{thm:non-smooth-lp}).

\para{Low complexity construction:} The basic construction relies on functions that require $2^{d/6}$ bits to describe and exponential time to compute. Most application of SCO use efficiently computable functions and therefore it is natural to ask whether the lower bound still holds for such functions. To answer this question we describe a construction based on a set of functions where each function requires just $\log d$ bits to describe (there are at most $d/2$ functions in the support of the distribution) and each function can be computed in $O(d)$ time. To achieve this we will use $W$ that consists of (scaled) codewords of an asymptotically good and efficiently computable binary error-correcting code \cite{Justesen:72,Spielman:96}. The functions are defined in a similar way but the additional structure of the code allows to use at most $d/2$ subsets of $W$ to define the functions. Further details of the construction appear in Section \ref{sec:complexity}.

 \para{Smoothness:} The use of maximum operator results in functions that are highly non-smooth (that is, their gradient is not Lipschitz-bounded) whereas the construction in \cite{SSSSS:2009} uses smooth functions. Smoothness plays a crucial role in many algorithms for convex optimization (see \cite{Bubeck15} for examples). It reduces the sample complexity of SCO in $\ell_2/\ell_2$ setup to $O(1/\eps)$ when the smoothness parameter is a constant (\eg \cite{Nemirovski:2009,Shalev-ShwartzBen-David:2014}).
Therefore it is natural to ask whether our strong lower bound holds for smooth functions as well. We describe a modification of our construction that proves a similar lower bound in the smooth case (with generalization error of $1/128$). The main idea is to replace each linear function $\la v,x\ra$ with some smooth function $\nu(\la v,x\ra)$ guaranteing that for different vectors $v^1,v^2 \in W$ and every $x \in \K$, only one of $\nu(\la v^1,x\ra)$ and $\nu(\la v^2,x\ra)$ can be non-zero. This allows to easily control the smoothness of $\max_{v \in V} \nu(\la v ,x \ra)$. See Figure \ref{fig:smooth} for an illustration of a function on which the construction is based (for $d=2$). The details of this construction appear in Sec.~\ref{sec:smooth} and the formal statement in Thm.~\ref{thm:l2-smooth}.
\begin{figure}[h]
\begin{center}
\includegraphics[width=\textwidth]{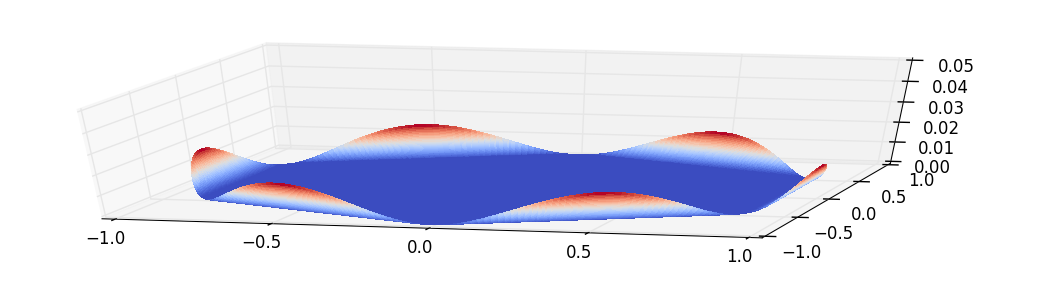}
\end{center}
\caption{Construction using 1-smooth functions for $d=2$.}
\label{fig:smooth}
\end{figure}

\para{$\ell_1$-regularization:} Another important contribution in  \cite{SSSSS:2009} is the demonstration of the important role that strong convexity plays for generalization in SCO: Minimization of $F_S(x) + \lambda R(x)$ ensures that ERM will have low generalization error whenever $R(x)$ is strongly convex (for a sufficiently large $\lambda$). This result is based on the proof that ERM of a strongly convex Lipschitz function is  {\em uniform replace-one} stable and the connection between such stability and generalization showed in \citep{BousquettE02} (see also \cite{ShwartzSSS10} for a detailed treatment of the relationship between generalization and stability).
It is natural to ask whether other approaches to regularization will ensure generalization. We demonstrate that for the commonly used $\ell_1$ regularization the answer is negative. We prove this using a simple modification of our lower bound construction: We shift the functions to the positive orthant where the regularization terms $\lambda \|x\|_1$ is just a linear function. We then subtract this linear function from each function in our construction, thereby balancing the regularization (while maintaining convexity and Lipschitz-boundedness). The details of this construction appear in Sec.~\ref{sec:l2-l1-reg} (see Thm.~\ref{thm:l2-l1}).

\para{Dependence on accuracy:} For simplicity and convenience we have ignored the dependence on the accuracy $\eps$, Lipschitz bound $L$ and radius $R$ of $\K$ in our lower bounds. It is easy to see, that this more general setting can be reduced to the case we consider here (Lipschitz bound and radius are equal to 1) with accuracy parameter $\eps'= \eps/(LR)$. We generalize our lower bound to this setting and prove that $\Omega(d/\eps'^2)$ samples are necessary for uniform convergence and $\Omega(d/\eps')$ samples are necessary for generalization of ERM. Note that the upper bound on the sample complexity of these settings is $\tilde O(d/\eps'^2)$ and therefore the dependence on $\eps'$ in our lower bound does not match the upper bound  for ERM. Resolving this gap or even proving any $\omega(d/\eps'+1/\eps'^2)$ lower bound is an interesting open problem. Additional details can be found \iffull in Section \ref{sec:eps}\else in the full version\fi.

\para{Bounded-range SCO:} Finally, 
we consider a more general class of bounded-range convex functions 
Note that the Lipschitz bound of 1 and the bound of 1 on the radius of $\K$ imply a bound of 1 on the range (up to a constant shift which does not affect the optimization problem). While this setting is not as well-studied, efficient algorithms for it are known. For example, the online algorithm in a recent work of Rakhlin and Sridharan \cite{RakhlinS15} together with standard online-to-batch conversion arguments \cite{Cesa-BianchiCG04}, imply that the sample complexity of this problem is $\tilde{O}(1/\eps^2)$ for any $\K$ that is an $\ell_2$ ball (of any radius). For general convex bodies $\K$, the problems can be solved via random walk-based approaches \cite{BelloniLNR15,FeldmanGV:15} or an adaptation of the center-of-gravity method given in \cite{FeldmanGV:15}. 
Here we show that for this setting ERM might completely fail already for $\K$ being the unit 2-dimensional ball. The construction is based on ideas similar to those we used in the smooth case and is formally described in \iffull Sec.~\ref{sec:non-lipschitz}. See Figure \ref{fig:nl} for an illustration of a function used in this construction.
\begin{figure}[h]
\begin{center}
\includegraphics[width=0.7\textwidth]{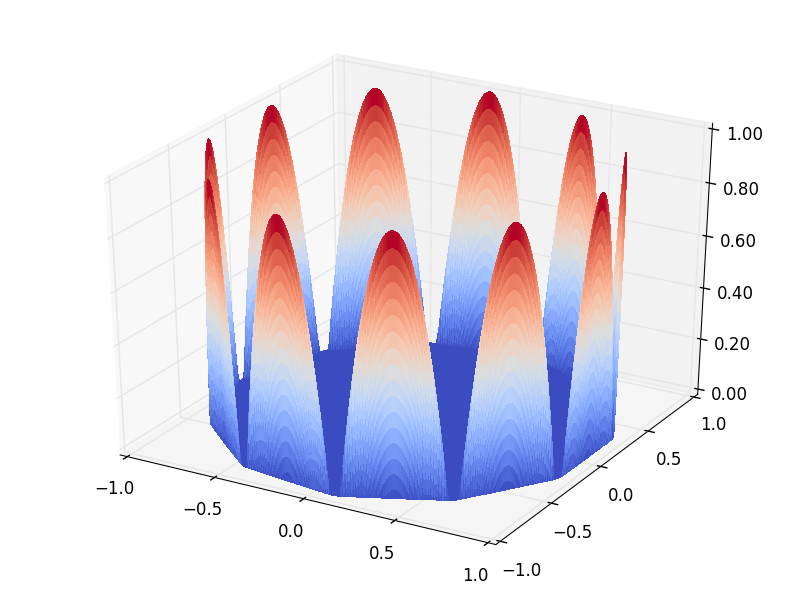}
\end{center}
\caption{Construction using non-Lipschitz convex functions with range in $[0,1]$.}
\label{fig:nl}
\end{figure}
\else in the full version.
\fi
\section{Preliminaries}
\label{sec:prelims}
For an integer $n\geq 1$ let $[n]\doteq \{1,\ldots, n\}$. 
Random variables are denoted by bold
letters, e.g., $\bff$.
Given $p\in [1,\infty]$ we denote the ball of radius $R>0$ in $\ell_p$ norm  by
$\B_p^d(R)$, and the unit ball by $\B_p^d$.

For a convex body (i.e., compact convex set with nonempty interior) $\K\subseteq \R^d$, we consider problems of the form
\begin{equation*} \label{StochOpt}
\min_\K(F_D) \doteq \min_{ x\in \K}\left\{F_D(x)\doteq\E_{\bff \sim D}[\bff(x)]\right\},
\end{equation*}
where $\bff$ is a random variable defined over some set of convex, sub-differentiable functions $\F$
on $\K$ and distributed according to some unknown probability distribution $D$. We denote $F^*=\min_\K(F_D)$. For an approximation parameter $\eps>0$ the goal is to find $x\in{\cal K}$ such that $F_D(x) \leq F^* +\eps$  and we call any such $x$ an {\em$\eps$-optimal solution}. 
 For an $n$-tuple of functions $S =(f^1,\ldots,f^n)$ we denote by $F_S \doteq \frac{1}{n} \sum_{i \in [n]} f^i$.

We say that a point $\hat{x}$ is an empirical risk minimum for an $n$-tuple $S$ of functions over $\K$, if $F_S(\hat{x}) = \min_\K(F_S)$. In some cases there are many points that minimize $F_S$ and in this case we refer to a specific algorithm that selects one of the minimums of $F_S$ as an empirical risk minimizer. To make this explicit we refer to the output of such a minimizer by $\hat{x}(S)$ .

Given $x\in\K$, and a convex function $f$ we denote by $\nabla f(x)\in \partial f(x)$ an arbitrary selection of a
subgradient. Let us make a brief reminder of some important classes of convex functions.
Let $p \in [1,\infty]$ and $q=p_\ast \doteq 1/(1-1/p)$. We say that a subdifferentiable convex function $f:\K\to\R$ is in the class
\begin{itemize}
\item $\F(\K,B)$ of $B$-bounded-range functions if for all $x\in \K$, $|f(x)| \leq B$.
\item $\F_{p}^0(\K,L)$ of $L$-Lipschitz continuous functions
w.r.t.~$\ell_p$, if for all $x,y\in\K$,
 $|f(x)-f(y)|\leq L\|x-y\|_p$;
\item $\F_{p}^1(\K,\sigma)$ of functions with $\sigma$-Lipschitz continuous
gradient w.r.t.~$\ell_p$, if for all $x,y\in\K$, $\|\nabla f(x)-\nabla f(y)\|_q\leq \sigma \|x-y\|_p$.
\eat{this implies
\begin{equation*}\label{smooth_dif_ineq}
f(y) \leq f(x) +\la\nabla f(x),y-x \ra +\frac{\sigma}{2}\|y-x\|_p^2.
\end{equation*}}
\eat{\item ${\cal S}_{p}(\K,\kappa)$ of $\kappa$-strongly convex functions w.r.t.~$\ell_p$, if for all $x,y\in\K$
\begin{equation} \label{str_cvx_dif_ineq}
f(y) \geq f(x) +\la\nabla f(x),y-x \ra +\frac{\kappa}{2}\|y-x\|_p^2.
\end{equation}}
\end{itemize}
We will omit $p$ from the notation when $p=2$. \iffull\else Omitted proofs can be found in the full version \cite{Feldman:16erm}\fi.

\section{Lower Bounds for Lipschitz-Bounded SCO}
\label{sec:lower-lipschitz}
In this section we present our main lower bounds for SCO of Lipschitz-bounded convex functions. For comparison purposes we start by formally stating some known bounds on sample complexity of solving such problems.
The following uniform convergence bounds can be easily derived from the standard covering number argument (\eg \cite{ShapiroNemirovsky:05,SSSSS:2009})
\begin{thm}
For $p \in [1,\infty]$, let $\K \subseteq \B_p^d(R)$ and let $D$ be any distribution supported on functions $L$-Lipschitz on $\K$ relative to $\ell_p$ (not necessarily convex). Then, for every $\eps,\delta >0$ and $n \geq n_1 = O\lp\frac{d \cdot (LR)^2 \cdot \log(dLR/(\eps\delta))}{\eps^2}\rp$
$$\pr_{\bS \sim D^n}\lb \exists x \in \K,\ \left| F_D(x) - F_\bS(x) \right|  \geq \eps \rb \leq \delta .$$
\end{thm}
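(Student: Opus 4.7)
The plan is a standard $\alpha$-net plus Hoeffding argument; convexity is not needed and is not used anywhere. Fix a parameter $\alpha>0$ to be tuned and take a minimal $\alpha$-covering $N \subseteq \K$ of $\K$ in the $\ell_p$-metric. Since $\K \subseteq \B_p^d(R)$, a standard volume-packing argument gives the bound $|N| \le (3R/\alpha)^d$.

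Next I would reduce to a bounded-range setting. Pick an arbitrary reference point $x_0 \in \K$ and replace each $\bff$ by $\bff - \bff(x_0)$; this shift is constant (in $x$) and therefore does not affect $F_D(x) - F_\bS(x)$ for any $x$. After the shift, $L$-Lipschitzness on $\K$ together with $\|x - x_0\|_p \le 2R$ yields $|\bff(x)| \le 2LR$ uniformly on $\K$. Hence for each fixed $y \in N$, Hoeffding's inequality applied to the i.i.d.\ bounded random variables $\bff^1(y),\ldots,\bff^n(y)$ gives
\[ \pr_{\bS \sim D^n}\!\left[\,|F_\bS(y)-F_D(y)| \ge \eps/2\,\right] \;\le\; 2\exp\!\left(-\frac{n\eps^2}{32\,(LR)^2}\right). \]
A union bound over the $(3R/\alpha)^d$ points of $N$ shows that, with probability at least $1-\delta$, $|F_\bS(y) - F_D(y)| < \eps/2$ holds simultaneously for every $y\in N$, provided
\[ n \;\ge\; \frac{32(LR)^2}{\eps^2}\left(d\log(3R/\alpha) + \log(2/\delta)\right). \]

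Finally, to transfer uniform convergence from $N$ to all of $\K$, I would use that both $F_D$ and $F_\bS$ inherit $L$-Lipschitzness in $\ell_p$ from the support of $D$ (averaging preserves Lipschitz constants). For any $x\in\K$ pick $y\in N$ with $\|x-y\|_p \le \alpha$; then by the triangle inequality
\[ |F_D(x) - F_\bS(x)| \;\le\; |F_D(x)-F_D(y)| + |F_D(y)-F_\bS(y)| + |F_\bS(y)-F_\bS(x)| \;\le\; 2L\alpha + \eps/2. \]
Choosing $\alpha = \eps/(4L)$ makes the right-hand side at most $\eps$, and substituting $\alpha$ back into the sample-size condition produces $n = O\!\left(d(LR)^2 \log(dLR/(\eps\delta))/\eps^2\right)$, as claimed. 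The only step that is not completely mechanical is the Lipschitz-to-bounded-range reduction via centering at $x_0$; everything else is a routine covering and concentration calculation.
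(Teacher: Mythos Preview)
The paper does not actually prove this theorem; it simply cites it as a consequence of ``the standard covering number argument.'' Your proposal \emph{is} that standard covering argument, so in spirit you are doing exactly what the paper points to.

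There is, however, one slip in the centering step. You write that replacing each $f$ by $f-f(x_0)$ ``does not affect $F_D(x)-F_\bS(x)$.'' That is not true: the shift $f(x_0)$ is constant in $x$ but \emph{random in $f$}, so after centering one has
\[
G_D(x)-G_\bS(x)\;=\;\bigl(F_D(x)-F_\bS(x)\bigr)\;-\;\bigl(F_D(x_0)-F_\bS(x_0)\bigr),
\]
which differs from $F_D(x)-F_\bS(x)$ by the sample-dependent constant $F_\bS(x_0)-F_D(x_0)$. Your Hoeffding step genuinely needs the centering (without it the random variables $\bff(y)$ need not be bounded under the bare Lipschitz hypothesis), so what your argument actually delivers is uniform closeness of $F_\bS(\cdot)-F_\bS(x_0)$ to $F_D(\cdot)-F_D(x_0)$. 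That is entirely sufficient for bounding excess risk of ERM (constant shifts do not move the minimizer), which is how the paper uses the theorem, but it is not literally $\sup_x|F_D(x)-F_\bS(x)|\le\eps$. To get the statement as written one needs an additional uniform bound on $|\bff(x_0)|$ over the support of $D$ (and then Hoeffding at $x_0$ controls the missing term); with only $L$-Lipschitzness and no range bound, $F_\bS(x_0)$ need not concentrate around $F_D(x_0)$ at all. Everything else in your proof --- the net size, the union bound, the Lipschitz transfer from the net to all of $\K$, and the choice $\alpha=\eps/(4L)$ --- is correct and routine.
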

The following upper bounds on sample complexity of  Lipschitz-bounded  SCO can be obtained from several known algorithms \cite{Nemirovski:2009,SSSSS:2009} (see \cite{Shalev-ShwartzBen-David:2014} for a textbook exposition for $p=2$).
\begin{thm}
For $p \in [1,2]$, let $\K \subseteq \B_p^d(R)$. Then,
there is an algorithm $\A_p$ that given $\eps,\delta >0$ and $n=n_p(d,R,L,\eps,\delta)$ i.i.d.~samples from any distribution $D$ supported on $\F_p^0(\K,L)$, outputs an $\eps$-optimal solution to $F_D$ over $\K$ with probability $\geq 1-\delta$.
For $p\in(1,2]$, $n_p = O((LR/\eps)^2 \cdot \log(1/\delta))$ and for $p=1$, $n_p = O((LR/\eps)^2 \cdot \log d \cdot \log(1/\delta))$.
\end{thm}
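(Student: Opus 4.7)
The plan is to realize $\A_p$ as stochastic mirror descent (SMD) with a mirror map tailored to the geometry of $\ell_p$, and then boost the expected-error bound to a high-probability guarantee.

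For $p \in (1,2]$, take $\Phi_p(x) = \tfrac{1}{2}\|x\|_p^2$, which is $(p-1)$-strongly convex with respect to $\|\cdot\|_p$; the induced Bregman divergence, call it $V_{\Phi_p}$, is bounded by $O(R^2/(p-1))$ on $\B_p^d(R)$. At step $i$, draw $\bff^i \sim D$, pick any subgradient $\nabla \bff^i(x_i)$, and update
\[ x_{i+1} = \arg\min_{x \in \K}\lb \eta \la \nabla \bff^i(x_i), x \ra + V_{\Phi_p}(x, x_i)\rb. \]
Since $\bff^i \in \F_p^0(\K,L)$, the dual-norm bound $\|\nabla \bff^i(x_i)\|_q \leq L$ holds with $q = p_*$. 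Choosing $\eta = \Theta(R\sqrt{(p-1)/n}/L)$, the standard SMD regret analysis yields
\[ \E\lb F_D(\bar x_n) - F^* \rb = O\lp \frac{LR}{\sqrt{(p-1)\, n}}\rp, \qquad \bar x_n \doteq \tfrac{1}{n}\sum_{i=1}^n x_i. \]
Treating $p$ as a fixed constant in $(1,2]$ gives $n = O((LR/\eps)^2)$ to reach expected error $\eps/6$.

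For $p=1$, $\|\cdot\|_1^2$ is not strongly convex, so take $q = 1 + 1/\log d$ and use $\Phi(x) = \frac{1}{2(q-1)}\|x\|_q^2$. Since $\|\cdot\|_q$ agrees with $\|\cdot\|_\infty$ up to a constant factor for this $q$, re-running the SMD analysis with $\ell_1$-Lipschitzness on the primal side and $\ell_q$-strong convexity on the dual side yields
\[ \E\lb F_D(\bar x_n) - F^* \rb = O\lp LR\sqrt{\tfrac{\log d}{n}}\rp, \]
and hence $n = O((LR/\eps)^2 \log d)$ to reach expected error $\eps/6$.

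To upgrade the in-expectation bound to probability at least $1-\delta$, I would use the standard confidence-boosting template. Run $k = O(\log(1/\delta))$ independent copies of the above SMD procedure on disjoint batches of samples, so that by Markov's inequality at least one candidate $\hat x_j$ is $\eps/2$-optimal with probability $\geq 1-\delta/2$. Then draw a fresh validation sample of size $O((LR/\eps)^2 \log(k/\delta))$ and, using Hoeffding's inequality together with the range bound $|\bff^i(x) - \bff^i(x_0)| \leq LR$ inherited from $L$-Lipschitzness on $\K$, estimate each $F_D(\hat x_j)$ within $\eps/4$; return the candidate with the smallest validation mean. With probability $\geq 1-\delta$ this output is $\eps$-optimal, and the total sample count is absorbed into the claimed $n_p$.

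The main technical step is pinning down the exact strong-convexity constant and Bregman-divergence bound for each $\Phi_p$ so that the rate comes out as stated; once these geometric quantities are in hand, the remaining work is the routine SMD regret calculation and a textbook boosting argument.
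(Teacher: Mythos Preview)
The paper does not give its own proof of this theorem; it is stated as a known result with citations to Nemirovski \etal and Shalev-Shwartz \etal, and a pointer to a textbook for the $p=2$ case. Your plan---stochastic mirror descent with a $p$-adapted mirror map, followed by a Markov-plus-validation confidence boost---is precisely the standard route from those references, so there is nothing to contrast.

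One small slip in your $p=1$ paragraph: with $q = 1 + 1/\log d$, it is the \emph{dual} norm $\|\cdot\|_{q_*}$ (where $q_* = \Theta(\log d)$) that agrees with $\|\cdot\|_\infty$ up to a constant factor, not $\|\cdot\|_q$ itself; $\|\cdot\|_q$ is close to $\|\cdot\|_1$. This is exactly what you need, since the subgradients of an $\ell_1$-Lipschitz function are bounded in $\ell_\infty$, and hence in $\ell_{q_*}$ up to a constant. With that correction the argument goes through as written.
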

Stronger results are known under additional assumptions on smoothness and/or strong convexity (\eg \cite{Nemirovski:2009,RakhlinSS12,ShamirZ13,BachM13}).

\subsection{Non-smooth construction}
\label{sec:lower-non-smooth}
We will start with a simpler lower bound for non-smooth functions. For simplicity, we will also restrict $R=L =1$. Lower bounds for the general setting can be easily obtained from this case by scaling the domain and desired accuracy\iffull (see Thm.~\ref{thm:l2-max-eps} for additional details)\fi.

We will need a set of vectors $W \subseteq \on^d$ with the following property: for any distinct $w^1,w^2 \in W$, $\la w^1, w^2 \ra \leq d/2$. The Chernoff bound together with a standard packing argument imply that there exists a set $W$ with this property of size $\geq e^{d/8} \geq 2^{d/6}$.

For any subset $V$ of $W$ we define a function \equ{g_V(x) \doteq \max\{1/2, \max_{w \in V} \la \bar{w}, x \ra \} ,\label{eq:def-gv}} where $\bar{w} \doteq w/\|w\| = w/\sqrt{d}$. See Figure \ref{fig:basic} for an illustration. We first observe that $g_V$ is convex and $1$-Lipschitz (relative to $\ell_2$). This immediately follows from $\la \bar{w}, x \ra$ being convex and $1$-Lipschitz for every $w$ and $g_V$
being the maximum of convex and $1$-Lipschitz functions.
\begin{thm}
\label{thm:l2-max}
Let $\K = \B_2^d$ and we define $\cH_2 \doteq \{g_V \cond V \subseteq W\}$ for $g_V$ defined in eq.~\eqref{eq:def-gv}. Let $D$ be the uniform distribution over $\cH_2$. Then for $n \leq d/6 $ and every set of samples $S$ there exists an ERM $\hat{x}(S)$ such that
$$\pr_{\bS \sim D^n}\lb  F_{D}(\hat{x}(\bS)) - F^* \geq 1/4 \rb > 1/2. $$
\end{thm}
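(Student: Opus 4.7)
The plan is to design an adversarial ERM selection rule and show it fails with constant probability. First I would note that every $g_V$ is bounded below by $1/2$ and attains this value at $x=0$ (since $\langle \bar w, 0\rangle = 0$), so $F^* = 1/2$. Each sample function $g_{\bV_i}$ is specified by a uniformly random subset $\bV_i \subseteq W$, equivalently by independent coin flips over the pairs $(w,i) \in W\times[n]$: each $w$ lies in $\bV_i$ independently with probability $1/2$.

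Next I would introduce the event
\[
E \doteq \bigl\{\exists\, w^* \in W \ : \ w^* \notin \bV_i \ \text{for all } i \in [n]\bigr\},
\]
and define the ERM selector $\hat{x}(\bS)$ as follows: on $E$, pick some such $w^*$ (by any tie-breaking) and set $\hat{x}(\bS) = \bar{w}^*$; off $E$, let $\hat{x}(\bS)$ be an arbitrary minimizer of $F_\bS$. The key observation is that on $E$ this really is an ERM: for every $i$, since $w^* \notin \bV_i$, every $w\in \bV_i$ is distinct from $w^*$, so by the defining property of $W$ we have $\langle \bar w,\bar{w}^*\rangle = \langle w,w^*\rangle/d \leq 1/2$. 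Hence $g_{\bV_i}(\bar{w}^*) = 1/2$ for every $i$, giving $F_\bS(\bar{w}^*)=1/2$, which is the global minimum of $F_\bS$ over $\K$.

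Then I would compute $F_D(\bar{w}^*)$ by conditioning on whether $w^* \in \bV$: if $w^* \in \bV$ (probability $1/2$), then $g_\bV(\bar{w}^*) \geq \langle \bar{w}^*,\bar{w}^*\rangle = 1$, and since $\|\bar{w}^*\|_2 = 1$ actually equals $1$; if $w^* \notin \bV$ (probability $1/2$), then as above $g_\bV(\bar{w}^*) = 1/2$. Thus $F_D(\bar{w}^*) = 1/2 + 1/4 = 3/4$, yielding $F_D(\hat{x}(\bS)) - F^* = 1/4$ on $E$.

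The last and only genuinely quantitative step is lower-bounding $\Pr[E]$. Because the $\{w \notin \bV_i \text{ for all } i\}$ events are independent across $w\in W$ (they depend on disjoint sets of coin flips),
\[
\Pr[E^c] \;=\; \prod_{w\in W}\bigl(1 - 2^{-n}\bigr) \;\leq\; \exp\!\bigl(-|W| \cdot 2^{-n}\bigr) \;\leq\; \exp\!\bigl(-2^{d/6-n}\bigr) \;\leq\; e^{-1} \;<\; 1/2,
\]
using $|W| \geq 2^{d/6}$ and $n \leq d/6$. This is the one place a calculation is needed, but the bound $(1-p) \leq e^{-p}$ does the work cleanly. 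I do not expect any real obstacle beyond making sure the independence structure across $W\times[n]$ is stated carefully and that the tie-breaking in the definition of $\hat x(\bS)$ is measurable in $\bS$.
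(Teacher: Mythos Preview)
Your proposal is correct and follows essentially the same argument as the paper: the same adversarial ERM rule (pick $\bar{w}^*$ for an unseen $w^*\in W$, else the origin), the same computation $F_D(\bar{w}^*)=3/4$ versus $F^*=1/2$, and the same coupon-collector style bound $\Pr[E^c]=(1-2^{-n})^{|W|}\le e^{-2^{d/6-n}}\le e^{-1}<1/2$. Your write-up is in fact slightly more explicit about why $g_{\bV_i}(\bar{w}^*)=1/2$ via the packing property of $W$ and about the independence structure, but there is no substantive difference in approach.
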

\begin{proof}
We start by observing that the uniform distribution over $\cH_2$ is equivalent to picking the function $g_{\bV}$ where $\bV$ is obtained by including every element of $W$ with probability $1/2$ randomly and independently of all other elements. Further,  by the properties of $W$, for every $w \in W$, and $V\subseteq W$, $g_V(\bar{w}) = 1$ if $w\in V$ and $g_V(\bar{w}) = 1/2$ otherwise. For $g_\bV$ chosen randomly with respect to $D$, we have that $w \in \bV$ with probability exactly $1/2$. This implies that $F_{D}(\bar{w}) = 3/4$.

Let $\bS= (g_{\bV_1}, \ldots, g_{\bV_n})$ be the random samples. Observe that $\min_{\K}(F_{\bS}) = 1/2$ and $F^* =\min_{\K}(F_D) = 1/2$ (the minimum is achieved at the origin $\bar{0}$). Now, if $\bigcup_{i \in [n]}\bV_i \neq W$ then let $\hat{x}(\bS) \doteq \bar{w}$ for any $w\in W\setminus \bigcup_{i \in [n]}\bV_i$. Otherwise $\hat{x}(\bS)$ is defined to be the origin $\bar{0}$. Then by the property of $\cH_2$ mentioned above, we have that for all $i$, $g_{\bV_i}(\hat{x}(\bS)) = 1/2$ and hence $F_\bS(\hat{x}(\bS)) = 1/2$. This means that $\hat{x}(\bS)$ is a minimizer of $F_\bS$.

Combining these statements, we get that, if $\bigcup_{i \in [n]}\bV_i \neq W$ then there exists an ERM $\hat{x}(\bS)$ such that $F_\bS(\hat{x}(\bS)) = \min_\K(F_\bS)$ and $F_D(\hat{x}(\bS)) - F^* = 1/4$. Therefore to prove the claim it suffices to show that for $n \leq d/6$ we have that $$\pr_{\bS\sim D^n}\lb\bigcup_{i \in [n]}\bV_i \neq W\rb >\fr{2} .$$ This easily follows from observing that for the uniform distribution over subsets of $W$, for every $w\in W$, $$\pr_{\bS\sim D^n}\lb w \in \bigcup_{i \in [n]}\bV_i\rb = 1-2^{-n}$$ and this event is independent from the inclusion of other elements in $\bigcup_{i \in [n]}\bV_i$. Therefore
$$\pr_{\bS\sim D^n}\lb\bigcup_{i \in [n]}\bV_i = W\rb = \lp1-2^{-n}\rp^{|W|} \leq e^{-2^{-n} \cdot 2^{d/6}} \leq e^{-1} < \fr{2} .$$
\end{proof}

\iffull
\begin{rem}
In our construction there is a different ERM algorithm that does solve the problem (and generalizes well). For example, the algorithm that always outputs the origin $\bar{0}$. Therefore it is natural to ask whether the same lower bound holds when there exists a unique minimizer. Shalev-Shwartz \etal \cite{SSSSS:2009} show that their lower bound construction can be slightly modified to ensure that the minimizer is unique while still having large generalization error. An analogous modification appears to be much harder to analyze in our construction and it is unclear to us how to ensure uniqueness in our strong lower bounds. A further question in this direction is whether it is possible to construct a distribution for which the empirical minimizer with large generalization error is unique and its value is noticeably (at least by $1/\poly(d)$) smaller than the value of $F_S$ at any point $x$ that generalizes well. Such distribution would imply that the solutions that ``overfits" can be found easily (for example, in a polynomial number of iterations of the gradient descent).
\end{rem}
\fi

\paragraph{Other $\ell_p$ norms:} We now observe that exactly the same approach can be used to extend this lower bound to $\ell_p/\ell_q$ setting. Specifically, for $p \in [1,\infty]$ and $q=p_\ast$ we define
$$g_{p,V}(x) \doteq \max\left\{\fr{2}, \max_{w \in V} \frac{\la w, x \ra}{d^{1/q}} \right\} .$$ It is easy to see that for every $V \subseteq W$, $g_{q,V} \in \F_{p}^0(\B_p^d,1)$. We can now use the same argument as before with the appropriate normalization factor for points in $\B_p^d$. Namely, instead of $\bar{w}$ for $w\in W$ we consider the values of the minimized functions at $w/d^{1/p} \in \B_p^d$. This gives the following generalization of Thm.~\ref{thm:l2-max}.
\begin{thm}
\label{thm:non-smooth-lp}
For every $p \in [1,\infty]$ let $\K = \B_p^d$ and we define $\cH_p \doteq \{g_{p,V} \cond V \subseteq W\}$ and let $D$ be the uniform distribution over $\cH_p$. Then for $n \leq d/6$ and every set of samples $S$ there exists an ERM $\hat{x}(S)$ such that
$$\pr_{\bS \sim D^n}\lb  F_{D}(\hat{x}(\bS)) - F^*  \geq 1/4 \rb > 1/2. $$
\end{thm}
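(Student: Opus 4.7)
The plan is to replicate the proof of Theorem \ref{thm:l2-max} verbatim, with the unit vector $\bar w = w/\sqrt{d}$ replaced by the point $w/d^{1/p}$ (which now lives in $\B_p^d$ since $\|w\|_p = d^{1/p}$ for $w \in \{-1,+1\}^d$) and the normalizer $\sqrt{d} = d^{1/2}$ inside the function replaced by $d^{1/q}$ (which is $\|w\|_q$). The identity $1/p + 1/q = 1$ is what makes everything line up: $d^{1/p} \cdot d^{1/q} = d$, so the ``diagonal'' inner product $\la w, w/d^{1/p}\ra / d^{1/q} = d / d = 1$ while the ``off-diagonal'' $\la w', w/d^{1/p}\ra/d^{1/q} = \la w',w\ra/d \leq 1/2$ by the almost-orthogonality property of $W$.

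First I would verify that $g_{p,V} \in \F_p^0(\B_p^d, 1)$: the linear functional $x\mapsto \la w, x\ra/d^{1/q}$ has $\ell_p$-Lipschitz constant $\|w\|_q/d^{1/q} = 1$ by H\"older, and convexity and the $1$-Lipschitz property are both preserved by taking maxima, including the max with the constant $1/2$. Next I would compute the function values at the test points $\{w/d^{1/p} : w \in W\}$ using the two inner-product computations above, concluding that $g_{p,V}(w/d^{1/p}) = 1$ if $w \in V$ and $g_{p,V}(w/d^{1/p}) = 1/2$ otherwise. This is the exact analogue of the evaluation used in Theorem \ref{thm:l2-max}.

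From here the argument is identical. Since $\bV$ includes each element of $W$ independently with probability $1/2$, we have $F_D(w/d^{1/p}) = \tfrac12 \cdot 1 + \tfrac12 \cdot \tfrac12 = 3/4$ for every $w \in W$, whereas $F^* = 1/2$ (attained at the origin, since $g_{p,V}(x) \geq 1/2$ everywhere). Given samples $\bS = (g_{p,\bV_1},\ldots,g_{p,\bV_n})$, define the ERM $\hat x(\bS) = w/d^{1/p}$ for some $w \in W \setminus \bigcup_i \bV_i$ if such a $w$ exists, and $\hat x(\bS) = \bar 0$ otherwise; the point-value computation shows $F_{\bS}(\hat x(\bS)) = 1/2 = \min_\K F_\bS$ in either case. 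Finally, the same coupon-collector estimate as in Theorem \ref{thm:l2-max} gives $\Pr[\bigcup_i \bV_i = W] \leq (1 - 2^{-n})^{|W|} \leq e^{-2^{-n}\cdot 2^{d/6}} \leq e^{-1} < 1/2$ whenever $n \leq d/6$, completing the proof.

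The only real obstacle is bookkeeping the two different normalizations ($d^{1/p}$ on the argument, $d^{1/q}$ inside the function) and checking that the dual-norm computation in the Lipschitz verification and the inner-product computation at the test points both come out correctly; once the identity $1/p+1/q=1$ is used this is immediate, and no new probabilistic or geometric idea is needed beyond the $\ell_2$ case.
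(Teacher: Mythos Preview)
Your proposal is correct and follows essentially the same approach as the paper: normalize the linear functionals by $d^{1/q}$ so that $g_{p,V}$ is $1$-Lipschitz in $\ell_p$, evaluate at the test points $w/d^{1/p}\in\B_p^d$, and then repeat the probabilistic argument of Theorem~\ref{thm:l2-max} verbatim. The paper's own treatment is even terser (it simply asserts that the same argument works with these two normalization changes), so your writeup is in fact more detailed than what appears there.
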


\subsection{Smoothness does not help}
\label{sec:smooth}
We now extend the lower bound to smooth functions. We will for simplicity restrict our attention to $\ell_2$ but analogous modifications can be made for other $\ell_p$ norms. The functions $g_V$ that we used in the construction use two maximum operators each of which introduces non-smoothness. To deal with maximum with $1/2$ we simply replace the function $\max\{1/2,\la \bar{w},x\ra \}$ with a quadratically smoothed version (in the same way as hinge loss is sometimes replaced with modified Huber loss). To deal with the maximum over all $w \in V$, we show that it is possible to ensure that individual components do not ``interact". That is, at every point $x$, the value, gradient and Hessian of at most one component function are non-zero (value, vector and matrix, respectively).  This ensures that maximum becomes addition and Lipschitz/smoothness constants can be upper-bounded easily.

Formally, we define $$\nu(a) \doteq  \left\{\arr{ll}{ 0 & \mbox{if }  a \leq 0 \\ a^2 & \mbox{otherwise.}} \right.$$
Now, for $V \subseteq W$, we define \equ{ h_V(x) \doteq \sum_{w\in V} \nu(\la \bar{w},x \ra-7/8) \label{eq:def-hv}.}
See Figure \ref{fig:smooth} for an illustration.  We first prove that $h_V$ is $1/4$-Lipschitz and 1-smooth.
\begin{lem}
\label{lem:smoothness}
For every $V\subseteq W$ and $h_V$ defined in eq.~\eqref{eq:def-hv} we have  $h_V\in \F_{2}^0(\B_2^d,1/4) \cap \F_{2}^1(\B_2^d,1)$.
\end{lem}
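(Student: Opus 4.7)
My plan is to reduce the analysis of $h_V$ to the analysis of a \emph{single} summand $\nu(\langle \bar{w}, x\rangle - 7/8)$ by establishing a separation property: on the domain $\B_2^d$, the "active" regions $A_w \doteq \{x \in \B_2^d : \langle \bar{w}, x\rangle > 7/8\}$ for distinct $w \in W$ are pairwise disjoint. Assume for contradiction that some $x \in \B_2^d$ lies in both $A_{w^1}$ and $A_{w^2}$ for distinct $w^1, w^2 \in W$. Then $\langle \bar{w^1} + \bar{w^2}, x\rangle > 7/4$, whereas Cauchy--Schwarz and $\|x\|_2 \leq 1$ give
\[
\langle \bar{w^1} + \bar{w^2}, x\rangle \leq \|\bar{w^1} + \bar{w^2}\|_2 = \sqrt{2 + 2\langle \bar{w^1}, \bar{w^2}\rangle} \leq \sqrt{2 + 2 \cdot \tfrac{1}{2}} = \sqrt{3} < 7/4,
\]
using the near-orthogonality property $\langle w^1, w^2\rangle \leq d/2$ of $W$. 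This contradiction establishes the separation.

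A direct consequence is that at any $x \in \B_2^d$, at most one term of the sum defining $h_V$ is nonzero, and the same holds on an open neighborhood of $x$ whenever $x$ lies in the interior of some $A_w$ or in the complement of $\bigcup_{w\in V}\overline{A_w}$. So locally $h_V$ agrees either with $0$ or with a single term $\nu(\langle \bar{w},\cdot\rangle - 7/8)$. The Lipschitz bound now follows directly: $\nabla\nu(a) = 2\max\{0, a\}$, so on $A_w$,
\[
\|\nabla h_V(x)\|_2 = 2(\langle \bar{w}, x\rangle - 7/8) \cdot \|\bar{w}\|_2 \leq 2 \cdot \tfrac{1}{8} \cdot 1 = \tfrac{1}{4},
\]
since $\langle \bar{w}, x\rangle \leq \|\bar{w}\|_2\|x\|_2 \leq 1$; outside all active regions the gradient is $0$.

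For smoothness, I would argue as follows. Fix $x, y \in \B_2^d$ and consider the gradient difference $\nabla h_V(x) - \nabla h_V(y)$. If $x, y$ lie in the closure of the same region (either both in $\overline{A_w}$ for a common $w$ or both outside every $A_w$), then the Hessian bound applies: on $A_w$ the Hessian is $\nu''(\langle \bar{w}, x\rangle - 7/8)\,\bar{w}\bar{w}^\top$ with operator norm controlled by $\|\bar{w}\|_2^2 = 1$ and the constant coming from $\nu''$, while on the zero region the Hessian vanishes. The remaining case, where $x \in A_{w^1}$ and $y \in A_{w^2}$ for distinct $w^1, w^2$, is handled by inserting intermediate points on the segment $[x,y]$ at which the gradient vanishes (the segment must cross the boundaries $\langle \bar{w^i}, \cdot\rangle = 7/8$ because the active regions are disjoint and $\nu'$ vanishes at $0$). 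Splitting the gradient difference across these boundary crossings and applying the single-region Hessian bound to each piece yields the claimed smoothness constant.

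The main obstacle is bookkeeping for the "cross-region" case: one must verify that the gradient is genuinely continuous across the boundary $\langle \bar{w}, x\rangle = 7/8$ (which holds because $\nu'(0) = 0$), and that the sum of the piecewise Hessian-bound contributions along the segment is controlled by $\|x-y\|_2$, not by a multiple that grows with the number of crossings. The disjointness of the active regions on $\B_2^d$ is exactly what ensures that any segment can cross at most one active region's boundary on entry and one on exit, keeping the total telescoping bound linear in $\|x-y\|_2$ with the stated constant.
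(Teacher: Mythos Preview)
Your proposal is correct and follows essentially the same structure as the paper's proof: establish that the active regions $A_w$ are pairwise disjoint on $\B_2^d$, deduce the gradient formula and the $1/4$-Lipschitz bound, and handle smoothness by reducing to a single summand, splitting the segment $[x,y]$ at a point where the gradient vanishes in the cross-region case. The only notable difference is your separation argument: you use Cauchy--Schwarz on $\langle \bar w^1+\bar w^2,x\rangle\le\|\bar w^1+\bar w^2\|\le\sqrt{3}<7/4$, whereas the paper argues via distances, showing $\|x-\bar w\|<1/2$ whenever $\langle \bar w,x\rangle>7/8$ and invoking $\|\bar w^1-\bar w^2\|\ge 1$ with the triangle inequality. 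Your worry about ``many crossings'' is in fact already resolved by the convexity of each $A_w$ together with disjointness: since $\langle \bar w^3,\cdot\rangle\le 7/8$ at both endpoints for any third $w^3$, it stays $\le 7/8$ along the whole segment, so no third region is entered---exactly the ``single intermediate point $z$'' picture the paper uses.
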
\iffull
\begin{proof}
It is easy to see that $\nu(\la \bar{w},x \ra-7/8)$ is convex for every $w$ and hence $h_V$ is convex. Next we observe that for every point $x\in \B_2^d$, there is at most one $w\in W$ such that $\la \bar{w}, x \ra > 7/8$. If $\la \bar{w}, x \ra > 7/8$ then $\|\bar{w} - x\|^2 = \|\bar{w}\|^2 + \|x\|^2 - 2 \la \bar{w}, x \ra  < 1+1 - 2(7/8) = 1/4$. On the other hand, by the properties of $W$, for distinct $w^1,w^2$ we have that $\|\bar{w}^1 - \bar{w}^2\|^2 = 2 - 2\la \bar{w}^1, \bar{w}^2 \ra \geq 1$. Combining these bounds on distances we obtain that if we assume that $\la \bar{w}^1, x \ra > 7/8$ and $\la \bar{w}^2, x \ra > 7/8$ then we obtain a contradiction $$\|\bar{w}^1 - \bar{w}^2\| \leq \|\bar{w}^1 - x\| + \|\bar{w}^2 - x\| < 1.$$

From here we can conclude that $$\nabla h_V(x) = \left\{\arr{ll}{ 2 (\la \bar{w}, x \ra - 7/8) \cdot \bar{w}\ & \mbox{if } \exists w\in V, \ \la \bar{w}, x \ra > 7/8 \\ 0 & \mbox{otherwise}}. \right.$$
This immediately implies that $\|\nabla h_V(x)\| \leq 1/4$ and hence $h_V$ is $1/4$-Lipschitz.

We now prove smoothness. Given two points $x,y \in \B_2^d$ we consider two cases. First the simpler case when there is at most one $w \in V$ such that either
$\la \bar{w}, x \ra > 7/8$ or $\la \bar{w}, y \ra > 7/8$. In this case $\nabla h_V(x) = \nabla \nu(\la \bar{w},x \ra-7/8)$ and $\nabla h_V(y) = \nabla \nu(\la \bar{w},y \ra-7/8)$. This implies that the 1-smoothness condition is implied by 1-smoothness of $\nu(\la \bar{w}, \cdot \ra-7/8)$. That is one can easily verify that $\|\nabla h_V(x)-\nabla h_V(y)\| \leq \|x-y\|$.

Next we consider the case where for $x$ there is $w^1\in V$ such that $\la \bar{w}^1, x \ra > 7/8$, for $y$ there is $w^2 \in V$ such that $\la \bar{w}^2, y \ra > 7/8$ and $w^1 \neq w^2$. Then there exists a point $z \in \B_2^d$ on the line connecting $x$ and $y$ such that $\la \bar{w}^1, z \ra \leq 7/8$ and $\la \bar{w}^2, z \ra \leq 7/8$. Clearly, $\|x - y\| = \|x - z\| +  \|z - y\|$. On the other hand, by the analysis of the previous case we have that $\|\nabla h_V(x)-\nabla h_V(z)\| \leq \|x-z\|$ and $\|\nabla h_V(z)-\nabla h_V(y)\| \leq \|z-y\|$. Combining these inequalities we obtain that
$$\|\nabla h_V(x)-\nabla h_V(y)\| \leq \|\nabla h_V(x)-\nabla h_V(z)\| + \|\nabla h_V(z)-\nabla h_V(y)\| \leq \|x - z\| +  \|z - y\| = \|x - y\| .$$
\end{proof} \fi

From here we can use the proof approach from Thm.~\ref{thm:l2-max} but with $h_V$ in place of $g_V$.
\begin{thm}
\label{thm:l2-smooth}
Let $\K = \B_2^d$ and we define $\cH \doteq \{h_V \cond V \subseteq W\}$ for $h_V$ defined in eq.~\eqref{eq:def-hv}. Let $D$ be the uniform distribution over $\cH$. Then for $n \leq d/6 $ and every set of samples $S$ there exists an ERM $\hat{x}(S)$ such that
$$\pr_{\bS \sim D^n}\lb F_{D}(\hat{x}(\bS)) - F^*  \geq 1/128 \rb > 1/2. $$
\end{thm}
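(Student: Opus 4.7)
} The plan is to mimic the argument of Thm.~\ref{thm:l2-max} almost verbatim, with the roles of the thresholded linear pieces now played by the smoothed quadratic pieces $\nu(\la \bar{w},x\ra - 7/8)$, and to invoke Lem.~\ref{lem:smoothness} both for the Lipschitz/smoothness certification and for the crucial ``non-interaction'' property that at every point at most one summand of $h_V$ is nonzero.

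First I would unpack the pointwise values of $h_V$ on the distinguished points $\{\bar w \mid w\in W\}\cup\{\bar 0\}$ that we will use as candidate minimizers. At the origin, $\la \bar w,\bar 0\ra = 0 < 7/8$ for every $w$, so $h_V(\bar 0)=0$ for all $V$. At $\bar w$ with $w\in W$ we have $\la \bar w,\bar w\ra=1$, giving $\nu(1-7/8)=(1/8)^2=1/64$; and for any other $w'\in W$ the property of $W$ yields $\la \bar{w'},\bar w\ra\le 1/2 < 7/8$, so $\nu(\la\bar{w'},\bar w\ra-7/8)=0$. Hence
\[
h_V(\bar w) \;=\; \tfrac{1}{64}\cdot \ind{w\in V}.
\]
Re-expressing $D$ as the product distribution that includes each $w\in W$ in $\bV$ independently with probability $1/2$, this immediately gives $F_D(\bar w)=\E_{\bV}[h_{\bV}(\bar w)]=1/128$ for every $w\in W$, while $F_D(\bar 0)=0$. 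Since each $h_V$ is nonnegative (by Lem.~\ref{lem:smoothness} and the definition of $\nu$), we also have $F^* = \min_\K F_D = 0$, attained at $\bar 0$.

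Next I would exhibit the bad ERM. Given samples $\bS=(h_{\bV_1},\ldots,h_{\bV_n})$, the empirical objective $F_\bS$ is again a nonnegative function with $F_\bS(\bar 0)=0$, so $\min_\K F_\bS=0$. If there exists $w\in W\setminus\bigcup_{i\in[n]} \bV_i$, define $\hat x(\bS)=\bar w$; otherwise set $\hat x(\bS)=\bar 0$. By the computation above, $w\notin \bV_i$ implies $h_{\bV_i}(\bar w)=0$ for every $i$, so $F_\bS(\hat x(\bS))=0=\min_\K F_\bS$ and $\hat x(\bS)$ is a legitimate ERM. In the first case, $F_D(\hat x(\bS))-F^*=1/128$.

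Finally I would bound the probability that $\bigcup_i \bV_i\neq W$ by exactly the calculation used in Thm.~\ref{thm:l2-max}: for $n\le d/6$,
\[
\pr_{\bS\sim D^n}\!\left[\,\bigcup_{i\in[n]} \bV_i = W\,\right] = (1-2^{-n})^{|W|} \le e^{-2^{-n}\cdot 2^{d/6}} \le e^{-1} < 1/2,
\]
which yields the desired bound. There is no real obstacle beyond what Lem.~\ref{lem:smoothness} already delivers; the only thing to be careful about is using the non-interaction property on $W$ to get the clean value $h_V(\bar w)=\tfrac{1}{64}\ind{w\in V}$, and thus the $1/128$ gap (rather than the $1/4$ gap in the non-smooth case, which is the unavoidable price of quadratic smoothing at scale $1/8$).
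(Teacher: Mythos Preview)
Your proposal is correct and follows essentially the same route as the paper's own proof: compute $h_V(\bar w)=\tfrac{1}{64}\ind{w\in V}$ and $h_V(\bar 0)=0$, deduce $F^*=0$ and $F_D(\bar w)=1/128$, pick $\hat x(\bS)=\bar w$ for any uncovered $w\in W\setminus\bigcup_i\bV_i$, and reuse the coupon-collector bound from Thm.~\ref{thm:l2-max}. The only minor remark is that nonnegativity of $h_V$ (and hence $F^*=0$, $\min_\K F_\bS=0$) follows directly from $\nu\ge 0$ rather than from Lem.~\ref{lem:smoothness}; the latter is only needed to certify membership of $h_V$ in $\F^0_2(\B_2^d,1/4)\cap\F^1_2(\B_2^d,1)$, which is part of the setup rather than the proof.
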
\iffull
\begin{proof}
Let $\bS= (h_{\bV_1}, \ldots, h_{\bV_n})$ be the random samples. As before we first note that $\min_{\K}(F_{\bS}) = 0$ and $F^* = 0$. Further, for every $w \in W$, $h_V(\bar{w}) = 1/64$ if $w\in V$ and $h_V(\bar{w}) = 0$ otherwise. Hence $F_D(\bar{w}) = 1/128$. Now, if $\bigcup_{i \in [n]}\bV_i \neq W$ then let $\hat{x}(\bS) \doteq \bar{w}$ for some $w\in W\setminus \bigcup_{i \in [n]}\bV_i$. Then for all $i$, $h_{\bV_i}(\hat{x}(\bS)) = 0$ and hence $F_\bS(\hat{x}(\bS)) = 0$. This means that $\hat{x}(\bS)$ is a minimizer of $F_\bS$ and $F_D(\hat{x}(\bS)) - F^* = 1/128$.

Now, exactly as in Thm.~\ref{thm:l2-max}, we can conclude that $\bigcup_{i \in [n]}\bV_i \neq W$ with probability $>1/2$.
\end{proof} \fi
\subsection{$\ell_1$ Regularization does not help}
\label{sec:l2-l1-reg}
Next we show that the lower bound holds even with an additional $\ell_1$ regularization term $\lambda \|x\|$ for positive $\lambda \leq 1/\sqrt{d}$. (Note that if $\lambda > 1/\sqrt{d}$ then the resulting program is no longer 1-Lipschitz relative to $\ell_2$. Any constant $\lambda$ can be allowed for $\ell_1/\ell_\infty$ setup). To achieve this we shift the construction to the positive orthant (that is $x$ such that $x_i \geq 0$ for all $i\in [d]$). In this orthant the subgradient of the regularization term is simply $\lambda \bar{1}$ where $\bar{1}$ is the all $1$'s vector. We can add a linear term to each function in our distribution that balances this term thereby reducing the analysis to non-regularized case.
More formally, we define the following family of functions. For $V \subseteq W$, \equ{h_V^\lambda(x) \doteq h_V(x-\bar{1}/\sqrt{d}) - \lambda \la \bar{1},x\ra \label{eq:def-h-lambda}.} Note that over $\B_2^d(2)$, $h_V^\lambda(x)$ is $L$-Lipschitz for $L \leq 2 (2-7/8) + \lambda \sqrt{d} \leq 9/4$.
We now state and prove this formally.

\begin{thm}
\label{thm:l2-l1}
Let $\K = \B_2^d(2)$ and for a given $\lambda \in (0,1/\sqrt{d}]$, we define $\cH^\lambda \doteq \{h_V^\lambda \cond V \subseteq W\}$ for $h_V^\lambda$ defined in eq.~\eqref{eq:def-h-lambda}. Let $D$ be the uniform distribution over $\cH^\lambda$. Then for $n \leq d/6$ and every set of samples $S$ there exists $\hat{x}(S)$ such that
\begin{itemize}
\item $F_S(\hat{x}(S)) = \min_{x\in \K}(F_S(x) + \lambda \|x\|_1)$;
\item $\pr_{\bS \sim D^n}\lb  F_{D}(\hat{x}(\bS)) - F^*   \geq 1/128 \rb > 1/2. $
\end{itemize}
\end{thm}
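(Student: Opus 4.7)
The plan is to reduce the regularized problem to the non-regularized smooth setting of Thm~\ref{thm:l2-smooth}, exploiting the fact that on the positive orthant $\|x\|_1 = \la \bar 1, x \ra$, so the linear term $-\lambda \la \bar 1, x \ra$ inside each $h_V^\lambda$ exactly cancels the $\ell_1$ regularizer. To keep the population-level analysis clean I would first arrange the construction so that every $w \in W$ satisfies $\sum_i w_i \le 0$, equivalently $\la \bar 1, \bar w \ra \le 0$. By the same Chernoff-plus-packing argument used to build the original $W$, restricting to the halfspace $\{w: \sum_i w_i \le 0\}$ costs only a constant factor in density, so a set $W$ of cardinality at least $2^{d/6}$ with the required pairwise-inner-product bound still exists.

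For the empirical minimizer, I would use the decomposition
\[
F_S(x) + \lambda \|x\|_1 \;=\; \tfrac{1}{n}\sum_{i} h_{V_i}\bigl(x - \bar 1/\sqrt d\bigr) \;+\; \lambda\bigl(\|x\|_1 - \la \bar 1, x\ra\bigr) \;\ge\; 0,
\]
where the second summand is nonnegative and vanishes exactly on the positive orthant. Whenever some $w \in W \setminus \bigcup_i V_i$ exists I would set $\hat x(S) \doteq \bar w + \bar 1/\sqrt d$. This point lies in the positive orthant, since its $i$-th coordinate equals $(w_i+1)/\sqrt d \in \{0, 2/\sqrt d\}$, and it lies in $\B_2^d(2)$ since $\|\bar w + \bar 1/\sqrt d\|^2 = 2 + 2\la \bar 1/\sqrt d, \bar w\ra \le 2$ by our sign choice. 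The argument of Thm~\ref{thm:l2-smooth} then gives $h_{V_i}(\bar w) = 0$ for every $i$ (using $w \notin V_i$), so $\hat x(S)$ attains the global minimum $0$ of the regularized empirical risk.

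For the population gap, let $\phi(y) \doteq \E_{\bV} h_\bV(y) = \tfrac{1}{2}\sum_{w' \in W} \nu\bigl(\la \bar{w'}, y\ra - 7/8\bigr)$, so that $F_D(x) = \phi(x - \bar 1/\sqrt d) - \lambda \la \bar 1, x\ra$. At $y = \bar w$ only the term $w' = w$ contributes, because for $w' \ne w$ we have $\la \bar{w'}, \bar w\ra \le 1/2 < 7/8$; thus $\phi(\bar w) = \tfrac{1}{2}\nu(1/8) = 1/128$, giving $F_D(\hat x(S)) = 1/128 - \lambda \la \bar 1, \bar w\ra - \lambda \sqrt d$. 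Combining the trivial bound $F^* \le F_D(\bar 1/\sqrt d) = \phi(0) - \lambda\sqrt d = -\lambda \sqrt d$ with the sign restriction $\la \bar 1, \bar w\ra \le 0$ yields $F_D(\hat x(S)) - F^* \ge 1/128 - \lambda \la \bar 1, \bar w\ra \ge 1/128$. The probabilistic claim that $W \setminus \bigcup_i \bV_i \ne \emptyset$ with probability strictly greater than $1/2$ is then verbatim the one in Thm~\ref{thm:l2-max}.

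The main obstacle is the interplay between the non-smooth $\ell_1$ regularizer and the population gap: one must simultaneously rule out a lower-valued minimizer of the regularized empirical risk off the positive orthant, and control the effect of the linear perturbation $-\lambda \la \bar 1, x\ra$ on $F^*$ so that it cannot absorb the $1/128$ advantage. The first issue is dispatched by the inequality $\|x\|_1 \ge \la \bar 1, x\ra$, which pins the regularized minimum to the positive orthant and makes the reduction to $h_V$ exact; the second is dispatched by the sign restriction on $W$, which guarantees that at our chosen $\hat x(S)$ the perturbation works in our favor rather than against us.
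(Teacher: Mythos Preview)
Your argument follows the same outline as the paper's: shift the construction into the positive orthant so that $\lambda\|x\|_1 = \lambda\la\bar 1,x\ra$ cancels the built-in $-\lambda\la\bar 1,x\ra$ term, verify that $\hat x(S)=\bar w+\bar 1/\sqrt d$ attains the regularized empirical minimum $0$, and then invoke the analysis of Thm.~\ref{thm:l2-smooth}. The paper's own proof is very terse here --- it simply asserts $F^*=F_D(\bar 0)=0$ and $h_V^\lambda(\bar w+\bar 1/\sqrt d)=h_V(\bar w)$, then defers to Thm.~\ref{thm:l2-smooth} --- and does not explicitly account for the fact that the linear term $-\lambda\la\bar 1,x\ra$ pushes $F^*$ strictly below $0$ (indeed $F_D(\bar 1/\sqrt d)=-\lambda\sqrt d$). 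Your additional restriction $\sum_i w_i\le 0$ on the elements of $W$ is precisely what makes the population-gap computation
\[
F_D(\hat x)-F^* \;\ge\; \tfrac{1}{128}-\lambda\la\bar 1,\bar w\ra \;\ge\; \tfrac{1}{128}
\]
go through; without it, an unused $w$ with $\la\bar 1,\bar w\ra>0$ could shrink the gap below $1/128$ (in the extreme case $w=\bar 1$ the candidate $\hat x$ actually coincides with a near-minimizer of $F_D$). So the extra constraint you impose is not cosmetic --- it patches a point the paper glosses over, at the cost of modifying the construction slightly, which is harmless for a lower bound.

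One small item to tighten: you assert that restricting the packing to the halfspace $\{\sum_i w_i\le 0\}$ ``costs only a constant factor'' and still yields $|W|\ge 2^{d/6}$. This is true, but deserves a sentence of justification --- e.g., run the Gilbert--Varshamov count inside this halfspace, which contains at least $2^{d-1}$ points, and check that the resulting rate still exceeds $1/6$ for $d$ large enough.
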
 \iffull
\begin{proof}
Let $\bS= (h_{\bV_1}^\lambda, \ldots, h_{\bV_n}^\lambda)$ be the random samples. We first note that $F^* = F_D(\bar{0}) = 0$ and
\alequn{ \min_{x\in\K}(F_\bS(x) + \lambda \|x\|_1) &= \min_{x\in\K}\lp \sum_{i\in [n]} h_{\bV_i}\lp x-\frac{\bar{1}}{\sqrt{d}}\rp - \lambda \la \bar{1},x\ra + \lambda \|x\|_1\rp \\&\geq \min_{x\in\K}\lp \sum_{i\in [n]} h_{\bV_i}\lp x-\frac{\bar{1}}{\sqrt{d}}\rp\rp \geq 0 .}

Further, for every $w \in W$, $\bar{w} + \bar{1}/\sqrt{d}$ is in the positive orthant and in $\B_2^d(2)$. Hence $h_V^\lambda(\bar{w}+ \bar{1}/\sqrt{d}) = h_V(\bar{w})$. We can therefore apply the analysis from Thm.~\ref{thm:l2-smooth} to obtain the claim.
\end{proof} \fi
\iffull
\subsection{Dependence on $\eps$}
\label{sec:eps}
We now briefly consider the dependence of our lower bound on the desired accuracy. Note that the upper bound for uniform convergence scales as $\tilde O(d/\eps^2)$. 

We first observe that our construction implies a lower bound of $\Omega(d/\eps^2)$ for uniform convergence nearly matching the upper bound (we do this for the simpler non-smooth $\ell_2$ setting but the same applies to other setting we consider).
\begin{thm}
\label{thm:l2-max-uniform}
Let $\K = \B_2^d$ and we define $\cH_2 \doteq \{g_V \cond V \subseteq W\}$ for $g_V$ defined in eq.~\eqref{eq:def-gv}. Let $D$ be the uniform distribution over $\cH_2$. Then for any $\eps > 0$ and $n \leq n_1 = \Omega(d/\eps^2)$ and every set of samples $S$ there exists a point $\hat{x}(S)$ such that
$$\pr_{\bS \sim D^n}\lb  F_{D}(\hat{x}(\bS)) - F_{\bS}(\hat{x}(\bS)) \geq \eps \rb > 1/2. $$
\end{thm}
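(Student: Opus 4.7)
The plan is to evaluate $F_D - F_S$ on the finite witness set $\{\bar{w} : w \in W\}$ and exploit independence across $w \in W$ to beat a union-bound/anti-concentration tradeoff.

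First I would compute, exactly as in the proof of Thm.~\ref{thm:l2-max}, that $g_V(\bar{w}) = 1$ if $w \in V$ and $g_V(\bar{w}) = 1/2$ otherwise (using the property $\langle \bar{w}^1, \bar{w}^2\rangle \leq 1/2$ for distinct $w^1,w^2 \in W$). Hence for every $w \in W$,
\[
F_D(\bar{w}) = \tfrac{3}{4}, \qquad F_{\bS}(\bar{w}) = \tfrac{1}{2} + \tfrac{1}{2}\cdot \bp_w, \qquad \text{where } \bp_w \doteq \tfrac{1}{n}|\{i : w \in \bV_i\}|.
\]
Therefore $F_D(\bar{w}) - F_{\bS}(\bar{w}) = \tfrac{1}{4} - \tfrac{1}{2}\bp_w$, and the event we want is $\bp_w \leq \tfrac{1}{2} - 2\eps$ for some $w \in W$. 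I would then set $\hat{x}(\bS) \doteq \bar{w}^\ast$ where $w^\ast$ is the element of $W$ minimizing $\bp_w$ (breaking ties arbitrarily).

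Next I would use the independence structure of $D$: since every element of $W$ is independently included in $\bV_i$ with probability $1/2$, and the $\bV_i$ are i.i.d., the random variables $\{n \bp_w\}_{w \in W}$ are mutually independent, each distributed as $\mathrm{Binomial}(n,1/2)$. The key ingredient is binomial anti-concentration (reverse Chernoff / Slud's inequality or a direct Stirling estimate): there exist absolute constants $c,C>0$ such that for all $\eps \in (0,1/8]$ and all $n \geq 1$,
\[
\Pr\!\left[\mathrm{Binomial}(n,1/2) \leq n(\tfrac{1}{2} - 2\eps)\right] \;\geq\; c\, e^{-C\eps^2 n}.
\]
By independence across $w$,
\[
\Pr\!\left[\forall w\in W:\ \bp_w > \tfrac{1}{2} - 2\eps\right] \;\leq\; \left(1 - c\, e^{-C\eps^2 n}\right)^{|W|} \;\leq\; \exp\!\left(-c\,|W|\,e^{-C\eps^2 n}\right).
\]

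Finally, plugging in $|W| \geq 2^{d/6}$, this upper bound is at most $1/2$ whenever $c\,2^{d/6} e^{-C\eps^2 n} \geq \ln 2$, i.e., whenever $C\eps^2 n \leq (d/6)\ln 2 + \ln(c/\ln 2)$. This gives the desired threshold $n \leq n_1 = \Omega(d/\eps^2)$, so with probability greater than $1/2$ there exists $w^\ast \in W$ with $\bp_{w^\ast} \leq \tfrac{1}{2} - 2\eps$, and then $F_D(\hat{x}(\bS)) - F_{\bS}(\hat{x}(\bS)) \geq \eps$.

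The main obstacle is invoking a sharp enough anti-concentration bound: a loose Gaussian-tail matching lower bound of the form $c\,e^{-C\eps^2 n}$ is needed so that the exponent $|W|\,e^{-C\eps^2 n}$ remains bounded away from $0$ precisely at the scale $n = \Theta(d/\eps^2)$. A Chernoff-style upper bound alone is insufficient here; one needs Slud's inequality or an explicit Stirling computation to certify that binomial tails are not too small. Everything else is a direct adaptation of the argument in Thm.~\ref{thm:l2-max}, with ``at least one unseen coordinate'' replaced by ``at least one under-represented coordinate.''
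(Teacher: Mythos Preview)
Your proposal is correct and follows essentially the same approach as the paper: evaluate $F_D - F_\bS$ at the witness points $\{\bar w : w\in W\}$, reduce to a binomial lower-tail event, invoke a Stirling/Slud-type anti-concentration bound of the form $\Pr[\mathrm{Bin}(n,1/2)\le n(1/2-2\eps)]\ge 2^{-cn\eps^2}$, and then use the mutual independence of the indicators $\{\ind{w\in\bV_i}\}_{w\in W}$ to amplify over $|W|\ge 2^{d/6}$ coordinates. The only cosmetic difference is that the paper phrases the deviation via the upper tail (which, by the symmetry of $\mathrm{Bin}(n,1/2)$, has the same probability), whereas you track the lower tail directly in line with the sign in the theorem statement.
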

\begin{proof}
For every $w \in W$, $$F_\bS(\bar{w}) = \fr{n} \sum_{i\in[n]} g_{\bV_i}(\bar{w}) = \fr{2} + \fr{2n} \sum_{i\in[n]}\ind{w\in \bV_i},$$
where $\ind{w\in \bV_i}$ is the indicator variable of $w$ being in $\bV_i$. If for some $w$,  $\fr{2n} \sum_{i\in[n]}\ind{w\in \bV_i} \geq 1/4+\eps$ then we will obtain a point $\bar{w}$ that violates the uniform convergence by $\eps$.  For every $w$, $\sum_{i\in[n]}\ind{w\in \bV_i}$ is distributed according to the binomial distribution. Using a standard approximation of the partial binomial sum up to $(1/2-2\eps)n$, we obtain that for some constant $c > 0$, the probability that this sum is $\geq 1/2+2\eps$ is at least
$$ \fr{2^n} \cdot \frac{1}{\sqrt{8n(1/4-\eps^2)}} \cdot \lp \fr{2} +2\eps \rp^{(1/2 +2\eps)n} \cdot \lp \fr{2} - 2\eps \rp^{(1/2 -2\eps)n} \geq 2^{-c n \eps^2}.$$
Now, using independence between different $w \in W$, we can conclude that, for $n \leq  d/(6c\eps^2)$, the probability that there exists  $w$ for which uniform convergence is violated is at least
$$1- \lp1-2^{- c n\eps^2}\rp^{|W|} \geq 1-e^{-2^{- c n\eps^2} \cdot 2^{d/6}} \geq 1- e^{-1} > \fr{2} .$$
\end{proof}

A natural question is whether the $d/\eps^2$ dependence also holds for ERM. We could not answer it and prove only a weaker $\Omega(d/\eps)$ lower bound. For completeness, we also make this statement for general radius $R$ and Lipschitz bound $L$.
\begin{thm}
\label{thm:l2-max-eps}
For $L,R >0$ and $\eps \in (0, LR/4)$, let $\K = \B_2^d(R)$ and we define $\cH_2 \doteq \{L \cdot g_V \cond V \subseteq W\} \subseteq \F^0(\B_2^d(R),L)$ for $g_V$ defined in eq.~\eqref{eq:def-gv}. We define the random variable $\bV_{\alpha}$ as a random subset of $W$ obtained by including each element of $W$ with probability $\alpha \doteq 2\eps/(LR)$ randomly and independently. Let $D_\alpha$ be the probability distribution of the random variable $g_{\bV_\alpha}$. Then for $n \leq d/32 \cdot LR/\eps$ and every set of samples $S$ there exists an ERM $\hat{x}(S)$ such that
$$\pr_{\bS \sim D_\alpha^n}\lb  F_{D_\alpha}(\hat{x}(\bS)) - F^* \geq \eps \rb > 1/2. $$
\end{thm}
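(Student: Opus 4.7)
The plan is to adapt the argument of Thm.~\ref{thm:l2-max} to a parametrized Bernoulli distribution, trading off the per-vector generalization gap against the number of samples needed before some $w \in W$ is missed. Concretely, the choice $\alpha = 2\eps/(LR)$ makes the expected excess value at a missed candidate point exactly $\eps$, while simultaneously permitting the number of samples to grow as $1/\alpha \sim dLR/\eps$ before every $w$ is observed. The hypothesis $\eps < LR/4$ ensures $\alpha < 1/2$, so routine inequalities on $\ln(1-\alpha)$ apply.

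I would first work out the function values at the scaled candidate points $x^w \doteq R \bar{w}$ for $w \in W$. After the appropriate rescaling (using $L \cdot g_V$ on $\B_2^d(R)$), one obtains that $L \cdot g_V(x^w) = LR$ when $w \in V$ and $L \cdot g_V(x^w) = LR/2$ when $w \notin V$, by the same near-orthogonality property of $W$ used in Thm.~\ref{thm:l2-max}; meanwhile $F^* = LR/2$, attained at the origin. Thus
\[
F_{D_\alpha}(x^w) - F^* \;=\; \alpha \cdot (LR - LR/2) \;=\; \alpha LR/2 \;=\; \eps.
\]
The ERM strategy is then: if $\bigcup_{i\in[n]} \bV_i \neq W$, pick any $w \in W \setminus \bigcup_i \bV_i$ and output $\hat{x}(\bS) = x^w$; otherwise output the origin. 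When $w$ is missed, every sampled function $L \cdot g_{\bV_i}$ evaluates to $LR/2$ at $x^w$, so $F_\bS(x^w) = LR/2 = \min_\K F_\bS$ and $\hat{x}(\bS)$ is a valid empirical minimizer with generalization gap exactly $\eps$.

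It remains to bound the probability that at least one $w \in W$ is missed. For each fixed $w$, independence across samples gives $\Pr[w \notin \bigcup_i \bV_i] = (1-\alpha)^n$, and since different $w$ are included in each $\bV_i$ independently, the events ``$w$ is missed'' are independent across $w$. Using $\ln(1-\alpha) \geq -2\alpha$ for $\alpha \leq 1/2$,
\[
(1-\alpha)^n \;\geq\; e^{-2\alpha n} \;=\; e^{-d/8}
\]
whenever $n \leq d/(16\alpha) = dLR/(32\eps)$. Combining with $|W| \geq e^{d/8}$ from Sec.~\ref{sec:lower-non-smooth},
\[
\pr\Bigl[\bigcup_{i\in[n]} \bV_i = W\Bigr] \;=\; \bigl(1 - (1-\alpha)^n\bigr)^{|W|} \;\leq\; \exp\!\bigl(-|W|\cdot e^{-d/8}\bigr) \;\leq\; e^{-1} \;<\; 1/2,
\]
so with probability $> 1/2$ some $w \in W$ is missed and the construction yields an ERM with $F_{D_\alpha}(\hat{x}(\bS)) - F^* \geq \eps$.

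The only subtlety, and the main place care is required, is to verify the values of $L \cdot g_V$ at $x^w$ under the scaled setup so that the per-vector gap is indeed $\alpha LR/2$ rather than being contaminated by the $1/2$ floor in the definition of $g_V$; this is where the slightly sharper packing bound $|W| \geq e^{d/8}$ (rather than $2^{d/6}$) is needed to land the constant at $32$ in the sample bound. Apart from this bookkeeping, the proof is a direct reuse of the argument from Thm.~\ref{thm:l2-max} with the parametrized Bernoulli distribution.
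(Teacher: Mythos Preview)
Your proof is correct and follows essentially the same argument as the paper: compute the values $L\cdot g_V(R\bar w)$, define the ERM that outputs $R\bar w$ for a missed $w$, and bound $\Pr[\bigcup_i \bV_i = W]$ via $(1-\alpha)^n \geq e^{-2\alpha n}$ combined with $|W|\geq e^{d/8}$. Your closing paragraph slightly conflates two unrelated points (the $1/2$ floor in $g_V$ versus the sharper packing bound $e^{d/8}$ used to get the constant $32$), but this is only an expository wrinkle and does not affect the argument.
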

\begin{proof}
By the same argument as in the proof of Thm.~\ref{thm:l2-max} we have that: For every $w \in W$, and $V\subseteq W$, $L \cdot g_V(R\bar{w}) = LR$ if $w\in V$ and $L \cdot g_V(R\bar{w}) = LR/2$ otherwise. For $g_\bV$ chosen randomly with respect to $D_\alpha$, we have that $w \in \bV$ with probability $2\eps/(LR)$. This implies that $F_{D_\alpha}(R\bar{w}) = LR/2 + \eps$. Similarly, $\min_{\K}(F_{\bS}) = LR/2$ and $F^* =\min_{\K}(F_{D_\alpha}) = LR/2$.

Therefore, if $\bigcup_{i \in [n]}\bV_i \neq W$ then there exists an ERM $\hat{x}(\bS)$ such that $F_\bS(\hat{x}(\bS)) = \min_\K(F_\bS)$ and $F_{D_\alpha}(\hat{x}(\bS)) - F^* = \eps$. For the distribution $D_\alpha$ and every $w\in W$, $$\pr_{\bS\sim D_\alpha^n}\lb w \in \bigcup_{i \in [n]}\bV_i\rb = 1-(1-\alpha)^{n} \leq 1-e^{-2\alpha n}$$ and this event is independent from the inclusion of other elements in $\bigcup_{i \in [n]}\bV_i$ (where we used that $1-\alpha \geq e^{-2\alpha}$ for $\alpha < 1/2$).
Therefore
$$\pr_{\bS\sim D_\alpha^n}\lb\bigcup_{i \in [n]}\bV_i = W\rb = \lp1- e^{-2\alpha n}\rp^{|W|} \leq e^{-e^{-2\alpha n} \cdot e^{d/8}} \leq e^{-1} < \fr{2} .$$
\end{proof}
\fi
\section{Lower Bound for Low-Complexity Functions}
\label{sec:complexity}
We will now demonstrate that our lower bounds hold even if one restricts the attention to functions that can be computed efficiently (in time polynomial in $d$). For this purpose we will rely on known constructions of binary linear error-correcting codes.
We describe the construction for non-smooth $\ell_2/\ell_2$ setting but analogous versions of other constructions can be obtained in the same way.

We start by briefly providing the necessary background about binary codes. For two vectors $w^1,w^2 \in \pmi^d$ let $\#_{\neq}(w^1,w^2)$ denote the Hamming distance between the two vectors. We say that a mapping $G:\pmi^k \rar \pmi^d$ is a $[d,k,r,T]$ binary error-correcting code if $G$ has distance at least $2r+1$, $G$ can be computed in time $T$ and there exists an algorithm that for every $w \in \pmi^d$ such that for some $z \in \pmi^k$, $\#_{\neq}(w,G(z))\leq r$ finds such $z$ in time $T$ (note that such $z$ is unique).


Given $[d,k,r,T]$ code $G$, for every $j \in [k]$, we define a function \equ{g_j(x) \doteq \max\left\{1-\frac{r}{2d}, \max_{w \in W_j} \la \bar{w}, x \ra \right\} \label{eq:def-g-code},}
where $W_j \doteq \{ G(z) \cond z \in \pmi^k, z_j=1\}$.
  As before, we note that $g_j$ is convex and $1$-Lipschitz (relative to $\ell_2$).
\iffull
\begin{thm}
\label{thm:l2-efficient}
Let $G$ be a $[d,k,r,T]$ code. Let $\K = \B_2^d$ and we define $\cH_G \doteq \{g_j \cond j \in [k]\}$ for $g_j$ defined in eq.~\eqref{eq:def-g-code}. Let $D$ be the uniform distribution over $\cH_G$. Then for every $x \in \K$, $g_j(x)$ can be computed in time $2T +O(d)$. Further,
for $n \leq k/2 $ and every set of samples $S \in \cH_G^n$ there exists an ERM $\hat{x}(S)$ such that
$$ F_{D}(\hat{x}(S)) - F^* \geq r/(4d)  .$$
\end{thm}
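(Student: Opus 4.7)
The plan is to handle the two claims separately, using the structural properties of the code plus the same style of argument as in Thm.~\ref{thm:l2-max}.

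For efficient computation of $g_j(x)$, the key observation I will establish first is a geometric claim: if $w\in\{-1,1\}^d$ satisfies $\la\bar w,x\ra>1-r/(2d)$ for some $x\in\B_2^d$, then $w$ is within Hamming distance strictly less than $r$ of $w^\ast\doteq\mathrm{sign}(x)$. This follows from writing $\|\bar w-x\|^2=1+\|x\|^2-2\la\bar w,x\ra<r/d$, noting that $w^\ast$ maximizes $\la\cdot,x\ra$ over $\{-1,1\}^d$ and hence satisfies $\|\bar w^\ast-x\|\le\|\bar w-x\|$, and then applying the triangle inequality together with $\|\bar w^1-\bar w^2\|^2=4\#_\neq(w^1,w^2)/d$. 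Given this, the evaluation algorithm is immediate: compute $w^\ast$ in $O(d)$ time, run the decoder on $w^\ast$ in time $T$, and if it returns $z^\ast$ with $\#_\neq(w^\ast,G(z^\ast))\le r$ and $z^\ast_j=1$, compute $G(z^\ast)$ in time $T$ and return $\max\{1-r/(2d),\la G(z^\ast)/\sqrt d,x\ra\}$; otherwise return $1-r/(2d)$. This totals $2T+O(d)$, and correctness is guaranteed because any $w\in W_j$ contributing a value above $1-r/(2d)$ must lie within Hamming distance $r$ of $w^\ast$, hence must equal $G(z^\ast)$ by the uniqueness of decoding within distance~$r$.

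For the ERM lower bound, I will first evaluate $g_j$ at codeword vertices $\bar{G(z)}$ for $z\in\{-1,1\}^k$. If $z_j=1$ then $G(z)\in W_j$ and $g_j(\bar{G(z)})=\la\bar{G(z)},\bar{G(z)}\ra=1$. If $z_j=-1$, then every $w\in W_j$ equals $G(z')$ for some $z'\neq z$, and the code distance yields $\#_\neq(G(z),G(z'))\ge 2r+1$, so $\la\bar{G(z)},\bar{G(z')}\ra\le 1-(4r+2)/d<1-r/(2d)$, giving $g_j(\bar{G(z)})=1-r/(2d)$. Since every $g_j$ is bounded below by $1-r/(2d)$, we obtain $F^\ast=1-r/(2d)$ (attained at the origin).

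Given a sample $S$ with index multiset $J=\{j_1,\ldots,j_n\}$, I will choose $\hat x(S)=\bar{G(z)}$ where $z_l=-1$ for all $l\in J$ and $z_l=+1$ for all $l\notin J$. By the previous paragraph, $g_{j_i}(\hat x(S))=1-r/(2d)$ for every $i$, so $F_S(\hat x(S))=1-r/(2d)=\min_\K F_S$, confirming that $\hat x(S)$ is an ERM. On the other hand,
\[
F_D(\hat x(S))=\frac{1}{k}\Bigl(|\{l:z_l=1\}|\cdot 1+|\{l:z_l=-1\}|\cdot(1-r/(2d))\Bigr)=1-\frac{|J|}{k}\cdot\frac{r}{2d}.
\]
Because $|J|\le n\le k/2$, this yields $F_D(\hat x(S))\ge 1-r/(4d)$, and therefore $F_D(\hat x(S))-F^\ast\ge r/(4d)$.

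The most delicate step is the geometric lemma in the first paragraph, which must convert an $\ell_2$-inner-product threshold into a Hamming-distance bound tight enough to match what the decoder guarantees. Everything else is a direct adaptation of the template in Thm.~\ref{thm:l2-max}, with the random subsets $\bV$ replaced by the deterministic family $\{W_j\}_{j\in[k]}$ whose structure is enforced by the code's minimum distance.
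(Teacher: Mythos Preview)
Your proposal is correct and follows essentially the same route as the paper's proof. The ERM construction (setting $z_l=-1$ exactly on the sampled indices and taking $\hat x(S)=\overline{G(z)}$) and the evaluation algorithm (decode $\mathrm{sign}(x)$, then check $z_j$ and the inner product) match the paper exactly. The one cosmetic difference is in the geometric step: the paper obtains $\#_{\neq}(\mathrm{sign}(x),w)\le r$ directly by observing that each coordinate with $\mathrm{sign}(x_i)\neq w_i$ contributes at least $1/d$ to $\|x-\bar w\|^2$, whereas you go through $\|\bar w^\ast-x\|\le\|\bar w-x\|$ and the triangle inequality; both arguments yield the needed bound.
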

\begin{proof}
Let $W \doteq \{G(z) \cond z \in \pmi^k\}$. For every distinct $w^1,w^2 \in W$, $\la \bar{w}^1 , \bar{w}^2 \ra = 1-2\cdot \#_{\neq}(w^1,w^2)/d \leq 1-(2r+1)/d < 1-r/(2d)$. Therefore, by the definition of $g_j$, for every $w \in W$,  $g_j(\bar{w}) = 1$ if $w \in W_j$ and $g_j(\bar{w}) = 1-r/(2d)$ otherwise.
Now for $z \in \pmi^k$, let $\#_1(z)$ denote the number of indices $j \in [k]$, such where $z_j=1$. For $w = G(z)$ there are exactly $\#_1(z)$ indices $j$, such that $w \in W_j$. This means that for a function $g_j$ chosen randomly with respect to $D$, we have that $g_j(\bar{w}) = 1$ with probability exactly $\#_1(z)/k$. This implies that $F_{D}(\bar{w}) = \#_1(z)/k + (1-\#_1(z)/k) (1-r/(2d)) = 1- (1-\#_1(z)/k) \cdot r/(2d)$.

Let $S= (g_{j_1}, \ldots, g_{j_n})$ be any set of $n$ points from $\cH_G$. Observe that $\min_{\K}(F_S) = 1-r/(2d)$ and $F^* =\min_{\K}(F_D) = 1-r/(2d)$ (the minimum is achieved at the origin $\bar{0}$). Now, for $I = \{j_1,\ldots,j_n\}$ let $z^I$ denote the vector such that $z^I_j = -1$ if $j \in I$ and $z^I_j = 1$, otherwise. Clearly, $\#_1(z^I) = k-|I|\geq k-n \geq k/2$. Let $w^I \doteq G(z^I)$ and let $\hat{x}(S) \doteq \bar{w}^I$.

Observe that for all $i\in [n]$, $z^I_{j_i} = -1$ and therefore $g_{j_i}(\hat{x}(S)) = 1-r/(2d)$. This means that
$F_S(\hat{x}(S)) =  1-r/(2d)$ and therefore $\hat{x}(S)$ is a minimizer of $F_S$.  On the other hand, $F_D(\hat{x}(S))  =  1- (1-\#_1(z^I)/k) \cdot r/(2d) \geq 1-r/(4d)$. This implies the claimed generalization error of $r/(4d)$.

Finally we need to show that for any $x\in \K$, $g_j(x)$ can be computed in time $2T + O(d)$. We use the following simple algorithm, let $\sgn(x)$ denote the element-wise application of the sign function. We apply the decoding algorithm for $G$ to $\sgn(x)$ and $z$ denote the result. If the decoding, succeeds, $z_j = 1$ and $\la x , \overline{G(z)} \ra \geq 1-r/(2d)$, then output $g_j(x) = \la x , \overline{G(z)} \ra$. Otherwise, output $g_j(x) = 1-r/(2d)$. It is easy to see that the running time of this algorithm is $2T + O(d)$.

To prove the correctness, observe that we only need to output a value that is different from $1-r/(2d)$
when there exists $w$ such that $w \in W_j$ and $\la x, \bar{w} \ra > 1-r/(2d)$. If $\la x, \bar{w} \ra > 1-r/(2d)$ then
\equ{\|x - \bar{w}\|^2= \|x \|^2 +  \|\bar{w}\|^2 - 2 \la x, \bar{w} \ra \leq 2 - 2(1- r/(2d)) = r/d . \label{eq:x-close}} Now for every $i \in [d]$ such that $\sgn(x_i) \neq w_i$, we have that $(x_i  - \bar{w}_i)^2 \geq 1/d$. Therefore $$\|x - \bar{w}\|^2 = \sum_{i\in [d]}(x_i  - \bar{w}_i)^2 \geq \frac{\#_{\neq}(\sgn(x),w)}{d} .$$
Combining it with eq.\eqref{eq:x-close} we obtain that we only need to output value that is different from $1-r/(2d)$ only when there exists $w \in W_j$ such that $\#_{\neq}(\sgn(x),w) \leq r$. By the properties of $G$, in this case there is an algorithm that in time $T$ will find the unique $z$ such that $G(z) = w$.  Given such $z$ we can compute $G(z)$ in time $T$ and verify that $z_j = 1$ and $\la x , \overline{G(z)} \ra \geq 1-r/(2d)$ in time $O(d)$.
\end{proof}\fi

We can now use any existing constructions of efficient binary error-correcting codes to obtain a lower bound that uses only a small set of efficiently computable convex functions. Getting a lower bound that has asymptotically optimal dependence on $d$ requires that $k = \Omega(d)$ and $r = \Omega(d)$ (referred to as being {\em asymptotically good}). The existence of efficiently computable and asymptotically good binary error-correcting codes was first shown by Justesen \cite{Justesen:72}. More recent work of Spielman \cite{Spielman:96} shows existence of asymptotically good codes that can be encoded and decoded in $O(d)$ time. In particular, for some constant $\rho > 0$, there exists a $[d, d/2,\rho \cdot d ,O(d)]$ binary error-correcting code. As a corollary we obtain the following lower bound.

\begin{cor}
\label{cor:l2-efficient}
Let $G$ be an asymptotically-good $[d, d/2,\rho \cdot d ,O(d)]$ error-correcting code for a constant $\rho > 0$. Let $\K = \B_2^d$ and we define $\cH_G \doteq \{g_j \cond j \in [d/2]\}$ for $g_j$ defined in eq.~\eqref{eq:def-g-code}. Let $D$ be the uniform distribution over $\cH_G$. Then for every $x \in \K$, $g_j(x)$ can be computed in time $O(d)$. Further,
for $n \leq d/4$ and every set of samples $S \in \cH_G^n$ there exists an ERM $\hat{x}(S)$ such that
$$ F_{D}(\hat{x}(S)) - F^* \geq \rho/4.$$
\end{cor}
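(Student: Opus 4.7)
The plan is to obtain the corollary as a direct instantiation of Theorem \ref{thm:l2-efficient} with the parameters supplied by Spielman's construction of asymptotically good linear codes. First I would cite Spielman's result to get a concrete $[d, k, r, T]$ binary error-correcting code with $k = d/2$, $r = \rho \cdot d$ for some absolute constant $\rho > 0$, and $T = O(d)$. This gives the desired efficiently encodable and decodable code family.

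Next I would plug these parameter values into Theorem \ref{thm:l2-efficient}. The theorem's hypothesis on the number of samples, $n \leq k/2$, becomes $n \leq d/4$, which matches the statement. The per-point evaluation time bound $2T + O(d)$ immediately simplifies to $O(d)$, matching the claimed efficiency. Finally, the generalization lower bound $r/(4d)$ evaluates to $\rho \cdot d / (4d) = \rho/4$, which gives the claimed constant separation.

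There is essentially no obstacle: the whole content of the corollary is already proved in Theorem \ref{thm:l2-efficient}, and the corollary only records what happens for the best known asymptotically good codes. The single thing that requires any care is confirming that Spielman's codes can be taken with rate exactly $1/2$ (so that $k = d/2$); this is standard, since asymptotically good codes with linear-time encoding and decoding exist at every constant rate in $(0,1)$, and in particular at rate $1/2$. With that in hand, the corollary follows by direct substitution and no further argument is needed.
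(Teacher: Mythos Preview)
Your proposal is correct and matches the paper's approach exactly: the corollary is stated immediately after noting Spielman's construction gives a $[d, d/2, \rho \cdot d, O(d)]$ code, and is obtained by direct substitution of these parameters into Theorem~\ref{thm:l2-efficient}. The only additional care you take---verifying that rate $1/2$ is attainable---is a reasonable remark that the paper leaves implicit.
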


\iffull

\section{Bounded-Range Convex Optimization}
\label{sec:non-lipschitz}
As we have outlined in the introduction, SCO is solvable in the more general setting in which instead of the Lipschitz bound and radius of $\K$ we have a bound on the range of functions in the support of distribution. Recall that for a bound on the absolute value $B$ we denote this class of functions by $\F(\K,B)$. This setting is more challenging algorithmically and has not been studied extensively. For comparison purposes and completeness, we state a recent result for this setting from \cite{RakhlinS15} (converted from the online to the stochastic setting in the standard way).
\begin{thm}[\cite{RakhlinS15}]
Let $\K = \B_2^d(R)$ for some $R>0$ and $B>0$. There is an efficient algorithm $\A$ that given $\eps,\delta >0$ and $n=O(\log(B/\eps)\log(1/\delta) B^2/\eps^2))$ i.i.d.~samples from any distribution $D$ supported on $\F(\K,B)$
 outputs an $\eps$-optimal solution to $F_D$ over $\K$ with probability $\geq 1-\delta$.
\end{thm}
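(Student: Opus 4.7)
The plan is a reduction to online convex optimization, treating the Rakhlin--Sridharan online algorithm as the key black box. The argument has three pieces: (i) a no-regret online algorithm for $\F(\K,B)$; (ii) an online-to-batch conversion to stochastic optimization; (iii) a martingale concentration step yielding the $\log(1/\delta)$ confidence dependence.

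For step~(i) I would invoke, as a black box, the online algorithm $\mathcal{B}$ of \cite{RakhlinS15}, which on any adversarial sequence $f_1,\ldots,f_n\in\F(\K,B)$ produces iterates $x_1,\ldots,x_n\in\K$ (with $x_t$ determined by $f_1,\ldots,f_{t-1}$) satisfying
\[
\sum_{t=1}^n f_t(x_t)\;-\;\min_{x\in\K}\sum_{t=1}^n f_t(x)\;\leq\; C\cdot B\sqrt{n\log n}.
\]
For step~(ii), run $\mathcal{B}$ on i.i.d.\ samples $f^1,\ldots,f^n\sim D$ and output $\bar{x}=\tfrac{1}{n}\sum_t x_t$. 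Convexity of $F_D$ gives $F_D(\bar{x})\leq \tfrac{1}{n}\sum_t F_D(x_t)$, and since $x_t$ is $f^1,\ldots,f^{t-1}$-measurable, the differences $M_t:=F_D(x_t)-f^t(x_t)$ form a bounded martingale difference sequence ($|M_t|\leq 2B$). Azuma's inequality then gives $\tfrac{1}{n}\sum_t F_D(x_t)\leq \tfrac{1}{n}\sum_t f^t(x_t)+O(B\sqrt{\log(1/\delta)/n})$ with probability $\geq 1-\delta/3$. Combining with the regret bound and a Hoeffding bound $\tfrac{1}{n}\sum_t f^t(x^*)\leq F^*+O(B\sqrt{\log(1/\delta)/n})$ for a fixed minimizer $x^*$, a union bound yields
\[
F_D(\bar{x})-F^*\;\leq\; O\!\left(B\sqrt{\tfrac{\log n}{n}}\right)+O\!\left(B\sqrt{\tfrac{\log(1/\delta)}{n}}\right)
\]
with probability $\geq 1-\delta$. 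Requiring the right-hand side to be $\leq \eps$ and solving for $n$ (unwinding the $\log n$ into $\log(B/\eps)$) yields $n=O((\log(B/\eps)+\log(1/\delta))B^2/\eps^2)$, which is bounded by the stated $O(\log(B/\eps)\log(1/\delta)B^2/\eps^2)$.

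The real obstacle — and the actual content of \cite{RakhlinS15} — is step~(i). A naive attempt using online gradient descent on a shrunk ball $\B_2^d((1-\alpha)R)$, on which a range-$B$ convex function is automatically $O(B/(\alpha R))$-Lipschitz (since any displacement of length $\alpha R$ inside $\K$ can change the function value by at most $2B$), yields regret $O(B\sqrt{n}/\alpha+\alpha Bn)$; here the second term comes from having to compete with $(1-\alpha)x^\ast$ in place of $x^\ast$, and optimizing $\alpha$ produces only the suboptimal $O(Bn^{3/4})$. Rakhlin and Sridharan sidestep this via a multi-scale strategy (essentially aggregating online gradient descent instances at dyadic shrinkages $\alpha_k=2^{-k}$ through an expert-aggregation meta-algorithm, with only a logarithmic overhead) or, equivalently, through their relaxation-based framework which yields a single learner of regret $\tilde O(B\sqrt{n})$ directly. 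Once that regret bound is in hand, steps~(ii) and~(iii) are essentially textbook, which is why the excerpt cites \cite{RakhlinS15} without further elaboration.
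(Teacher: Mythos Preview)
Your proposal is correct and matches the paper's intent exactly. The paper does not actually prove this theorem: it is stated as a cited result from \cite{RakhlinS15}, with the only hint being the remark that the result is ``converted from the online to the stochastic setting in the standard way'' (and, earlier in the introduction, the phrase ``the online algorithm in a recent work of Rakhlin and Sridharan \cite{RakhlinS15} together with standard online-to-batch conversion arguments \cite{Cesa-BianchiCG04}''). Your three-step outline --- take the $\tilde O(B\sqrt{n})$ regret bound of \cite{RakhlinS15} as a black box, run it on i.i.d.\ samples, and apply the Cesa-Bianchi--Conconi--Gentile martingale/Azuma argument to the averaged iterate --- is precisely that standard conversion, so there is nothing to add or correct.
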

The case of general $\K$ can be handled by generalizing the approach in \cite{RakhlinS15} or using the algorithms in \cite{BelloniLNR15,FeldmanGV:15}. Note that for those algorithms the sample complexity will have a polynomial dependence on $d$ (which is unavoidable in this general setting).
\eat{
\begin{thm}[\cite{FeldmanGV:15}]
Let $\K\subseteq \R^d$ be a convex body given by a membership oracle $\B_2^d(R_0) \subseteq \K \subseteq \B_2^d(R_1)$. There is an algorithm that that given $\eps,\delta >0$ and $n=\tilde O(d^3\log(1/\delta) B^2/\eps^2))$ i.i.d.~samples from any distribution $D$ supported on $\F(\K,B)$ outputs an $\eps$-optimal solution with probability $\geq 1-\delta$. The algorithm runs in $\poly(d, 1/\eps, \log(R_1/R_0))$ time.
\end{thm}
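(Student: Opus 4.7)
I would adapt the classical center-of-gravity (Levin) cutting-plane method to the stochastic, bounded-range setting, instantiating the centroid step by a random walk (ball walk or hit-and-run) since $\K$ is accessed only through a membership oracle. The algorithm maintains a shrinking sequence of convex sets $\K_0 = \K \supseteq \K_1 \supseteq \cdots \supseteq \K_T$; at iteration $k$ it (i) approximately samples the centroid $c_k$ of $\K_k$ using a log-concave sampler composed with the induced membership oracle of $\K_k$, (ii) draws fresh i.i.d.~samples $\bff^1,\ldots,\bff^m \sim D$ and averages their subgradients at $c_k$ to produce $\hat g_k \approx \nabla F_D(c_k)$, and (iii) cuts $\K_k$ by the halfspace $\{x : \langle \hat g_k, x - c_k \rangle \leq 0\}$. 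A robust form of Grünbaum's lemma gives $\mathrm{vol}(\K_{k+1}) \leq (1 - 1/e + o(1))\, \mathrm{vol}(\K_k)$, so $T = O(d \log(R_1/(\eps R_0)))$ iterations suffice to localize a near-optimum.

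\textbf{From bounded-range to effective Lipschitz.} Because functions in $\F(\K,B)$ need not be Lipschitz, $\|\nabla f(x)\|$ can blow up near $\partial \K$. I would restrict optimization to $\K' = (1-\gamma)\K$ (shrunk around an interior point supplied by the membership oracle) with $\gamma = \mathrm{poly}(\eps/B, 1/d)$. Convexity gives $\min_{\K'} F_D \leq \min_{\K} F_D + \eps/2$, and on $\K'$ every subgradient of any $f \in \F(\K,B)$ is bounded in norm by an effective Lipschitz constant $L = O(B/(\gamma R_0))$. After an affine preconditioning computed from membership queries to normalize aspect ratio, $L$ is polynomial in $d$ and $B/\eps$ and only logarithmically dependent on $R_1/R_0$, which is crucial for the runtime claim.

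\textbf{Sample complexity accounting.} Fix a per-iteration gradient accuracy $\eta$. Since each $\nabla \bff^i(c_k) \in \F(\K',L)$, a vector Hoeffding/Azuma inequality yields $\|\hat g_k - \nabla F_D(c_k)\|_2 \leq \eta$ with probability $\geq 1 - \delta/T$ from $m = \tilde O(L^2 d / \eta^2 \cdot \log(T/\delta))$ samples. Choosing $\eta$ so that the $T = \tilde O(d)$ cutting-plane errors aggregate to at most $\eps/2$, and substituting the expressions for $L$ and $\gamma$, yields $m = \tilde O(d^2 B^2 / \eps^2 \cdot \log(1/\delta))$. Multiplying by $T$ gives the advertised $n = \tilde O(d^3 B^2 / \eps^2 \cdot \log(1/\delta))$. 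Each iteration runs in $\mathrm{poly}(d, \log(R_1/R_0))$ time by the Lovász--Vempala mixing bounds for the ball walk with warm starts reused across iterations.

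\textbf{Main obstacle.} The principal difficulty is a robust analysis that simultaneously tolerates (i) an $O(1/\mathrm{poly}(d))$ total-variation error in the sampled centroid relative to the true centroid of $\K_k$, and (ii) an $O(\eta)$ direction-error in the cutting hyperplane from the stochastic subgradient estimate. One must show that Grünbaum's volume contraction survives both perturbations, in expectation, and that the cumulative effect over $T$ iterations neither blows up the sample complexity beyond $d^3$ nor re-introduces polynomial dependence on $R_1/R_0$. This forces the precise coupling of $\gamma$, $\eta$, $m$, and the walk's mixing tolerance, and is the technical heart of the argument; the remaining ingredients are standard concentration and log-concave sampling.
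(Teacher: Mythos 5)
This theorem is not proved in the paper at all: it is an external result quoted from \cite{FeldmanGV:15} for context on bounded-range SCO (indeed, in the source it sits inside a commented-out block), so there is no in-paper argument to compare yours against. Taken on its own terms, your overall architecture --- center-of-gravity cuts with approximate centroids obtained by log-concave sampling through the membership oracle, plus a reduction from bounded range to an effective Lipschitz bound on a shrunken body --- is the right family of techniques for the cited result, and the robustness of Gr\"unbaum-type contraction to approximate centroids is indeed standard.

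The genuine gap is in the sample-complexity accounting, and it is not a bookkeeping slip: to guarantee $\min_{\K'}F_D\leq \min_{\K}F_D+\eps/2$ using only the range bound you must take $\gamma=O(\eps/B)$, so the effective Lipschitz constant on $\K'$ is $L=\Theta(B/(\gamma R_0))=\Theta(B^2/(\eps R_0))$. For the halfspace cut through $c_k$ to be safe, the gradient estimate needs \emph{absolute} accuracy $\eta=O(\eps/R_1)$, since a removed point can be better than $c_k$ by up to $\eta\cdot\mathrm{diam}(\K)$ (relative/directional accuracy does not help, because $\|\nabla F_D(c_k)\|$ can itself be of order $L$, and no affine preconditioning makes $R_1/R_0$ smaller than order $\sqrt{d}$ for a general body). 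Plugging these into your vector-Hoeffding step gives per-iteration cost $\tilde\Omega(L^2/\eta^2)=\tilde\Omega\bigl(\mathrm{poly}(d)\,B^4/\eps^4\bigr)$: the range-to-Lipschitz conversion enters squared, and ``substituting the expressions for $L$ and $\gamma$'' yields $B^4/\eps^4$, not the claimed $d^2B^2/\eps^2$ per iteration. To reach $\tilde O(d^3 B^2/\eps^2)$ the statistical error must scale with the range $B$ of the function \emph{values} rather than with the worst-case subgradient norm on the shrunken body; this is exactly why the bounded-range case in \cite{FeldmanGV:15} (and \cite{BelloniLNR15}) is handled via estimates of function values (random-walk/annealing-style optimization, or cutting-plane steps analyzed under an error criterion tied to value estimates) rather than by naively averaging subgradients. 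That substitute estimation step, not the perturbed-centroid analysis you flag as the main obstacle, is the missing core of the argument.
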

}

In contrast to these results, we will now demonstrate that for such problems an ERM algorithm will require an infinite number of samples to succeed already for $d=2$. As in the proof of Thm.~\ref{thm:l2-smooth} we define $f_V(x) = \sum_{w\in V} \phi(\la w, x\ra)$. However we can now use the lack of bounds on the Lipschitz constant (or smoothness) to use $\phi(a)$ that is equal to $0$ for $a \leq 1-\alpha$ and $\phi(1)=1$. For every $m \geq 2$, we can choose a set of $m$ vectors $W$ evenly spaced on the unit circle such that for a sufficiently small $\alpha > 0$, $\phi(\la w, x\ra)$ will not interact with $\phi(\la w', x\ra)$, for any two distinct $w,w' \in W$. More formally, let $m$ be any positive integer, let $w^i \doteq (\sin(2\pi \cdot i/m),\cos(2\pi \cdot i/m))$ and let  $W_m \doteq \{w^i \cond i\in[m]\}$. Let
$$\phi_\alpha(a) \doteq  \left\{\arr{ll}{ 0 & \mbox{if }  a \leq 1-\alpha \\ (a-1+\alpha)/\alpha & \mbox{otherwise.}} \right.$$
For $V \subseteq W_m$ we set $\alpha \doteq 2/m^2$ and
define
\equ{f_V(x) \doteq \sum_{w\in V} \phi_\alpha(\la w,x \ra) \label{eq:def-h-circle}.}
\iffull
See Figure \ref{fig:nl} for an illustration.  It is easy to see that $f_V$ is convex. We now verify that the range of $f_V$ is $[0,1]$ on $\B_2^2$. Clearly, for any unit vector $w^i \in W_m$, and $x \in \B^2_2$, $\la w^i,x\ra \in [-1,1]$ and therefore $\phi_\alpha(\la w^i, x\ra)\in [0,1]$. Now it suffices to establish that for every $x \in \B_2^2$, there exists at most one vector $w\in W_m$ such that $\phi_\alpha(\la w, x\ra) >0$.
To see this, as in Lemma \ref{lem:smoothness}, we note that if $\phi_\alpha(\la w, x\ra) >0$ then $\la w, x\ra > 1-\alpha$. For $w\in W_m$ and $x\in \B_2^2$, this implies that $\|w - x\| < \sqrt{1 + 1 - 2(1-\alpha)} = \sqrt{2\alpha}$. For our choice of $\alpha=2/m^2$, this implies that $\|w - x\| < 2/m$. On the other hand, for $i\neq j \in [m]$, we have $$\|w^i - w^j\| \geq \|w^1 - w^m\| \geq \sin(2\pi/m) \geq 2\pi/m - (2\pi/m)^3/6 \geq 4/m .$$
Therefore there does not exist $x$ such that $\phi_\alpha(\la w^i, x\ra) >0$ and $\phi_\alpha(\la w^j, x\ra) >0$. Now we can easily establish the lower bound.
\fi
\begin{thm}
\label{thm:l2-circle}
Let $\K = \B_2^2$ and $m\geq 2$ be an integer. We define $\cH_m \doteq \{f_V \cond V \subseteq W\}$ for $f_V$ defined in eq.~\eqref{eq:def-h-circle}. Let $D_m$ be the uniform distribution over $\cH_m$. Then for $n \leq \log m$ and every set of samples $S$ there exists an ERM $\hat{x}(S)$ such that $$\pr_{\bS \sim D_m^n}\lb  F_{D}(\hat{x}(\bS)) - F^*   \geq 1/2 \rb > 1/2. $$
\end{thm}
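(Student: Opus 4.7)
The proof will follow the same template as Thm.~\ref{thm:l2-max} and Thm.~\ref{thm:l2-smooth}: build a distribution on functions parameterized by random subsets of $W_m$ so that any ``missing'' direction $w^i$ serves as a fooling point for some ERM.

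First, I would reuse the structural facts about $f_V$ that were already established just before the theorem: $f_V$ is convex, takes values in $[0,1]$ on $\B_2^2$, and for every $x \in \B_2^2$ at most one term $\phi_\alpha(\langle w, x\rangle)$ is nonzero. From this I extract the key pointwise identity: for each $w^i \in W_m$ and each $V \subseteq W_m$, $f_V(w^i) = \phi_\alpha(1) = 1$ if $w^i \in V$, and $f_V(w^i) = 0$ otherwise (because $\langle w^i, w^j\rangle < 1 - \alpha$ for $j \neq i$, so no other term contributes at $w^i$). Also, $f_V \geq 0$ everywhere, and $f_V(\bar 0) = 0$, so $F^* = F_D(\bar 0) = 0$.

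Second, since $D_m$ is the uniform distribution over $\cH_m$, sampling $f_\bV$ is equivalent to including each $w \in W_m$ in $\bV$ independently with probability $1/2$. Hence $F_D(w^i) = \Pr[w^i \in \bV] \cdot 1 + \Pr[w^i \notin \bV] \cdot 0 = 1/2$ for every $i \in [m]$. Now take samples $\bS = (f_{\bV_1}, \ldots, f_{\bV_n})$; since each $f_{\bV_j} \geq 0$, we have $\min_\K F_\bS \geq 0$. If the event $\mathcal{E} = \{\bigcup_{j \in [n]} \bV_j \neq W_m\}$ occurs, pick any $w^i \in W_m \setminus \bigcup_j \bV_j$ and set $\hat{x}(\bS) = w^i$. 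Then $f_{\bV_j}(\hat{x}(\bS)) = 0$ for every $j$, so $F_\bS(\hat{x}(\bS)) = 0 = \min_\K F_\bS$, confirming $\hat{x}(\bS)$ is an ERM, and $F_D(\hat{x}(\bS)) - F^* = 1/2$.

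Finally, I bound $\Pr[\mathcal{E}]$ from below using independence across the $m$ coordinates of $W_m$: each fixed $w^i$ is missed by all $n$ samples with probability $(1 - 2^{-n})$, and these events are independent across $i$. Therefore
\[
\Pr_{\bS \sim D_m^n}\!\lb \bigcup_{j\in[n]} \bV_j = W_m \rb = (1-2^{-n})^m \leq e^{-m \cdot 2^{-n}}.
\]
For $n \leq \log m$ we have $m \cdot 2^{-n} \geq 1$, so this probability is at most $e^{-1} < 1/2$, which gives $\Pr[\mathcal{E}] > 1/2$ and yields the claim.

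There is no substantive obstacle: the geometric fact that different bumps do not overlap on $\B_2^2$ was already dispensed with in the paragraph defining $f_V$, and the remainder is the same coupon-collector style counting argument as in Thm.~\ref{thm:l2-max} with $|W|$ replaced by $m$. The only point to watch is ensuring the bound $n \leq \log m$ (not $n \leq m/6$ as in the $2^{d/6}$ construction) comes out correctly, which is exactly why the condition $m \cdot 2^{-n} \geq 1$ is the right threshold here.
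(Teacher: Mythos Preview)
Your proposal is correct and follows essentially the same argument as the paper's own proof: identify $F^*=0$, compute $F_{D_m}(w^i)=1/2$, use any missed direction $w^i\in W_m\setminus\bigcup_j \bV_j$ as an ERM, and bound $\Pr[\bigcup_j\bV_j=W_m]=(1-2^{-n})^m\le e^{-1}<1/2$ for $n\le\log_2 m$. One small wording slip: the probability that a fixed $w^i$ is \emph{missed} by all $n$ samples is $2^{-n}$, not $1-2^{-n}$; your displayed formula is nonetheless correct.
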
 \iffull
\begin{proof}
Let $\bS= (f_{\bV_1}, \ldots, f_{\bV_n})$ be the random samples. Clearly, $F^* = 0$ and
$\min_{\K}(F_\bS) = 0$. Further, the analysis above implies that for every $w \in W_m$ and $V \subseteq W_m$, $f_V(w) = 1$ if $w\in V$ and $f_V(w) = 0$ otherwise. Hence $F_{D_m}(w) = 1/2$. Now, if $\bigcup_{i \in [n]}\bV_i \neq W_m$ then let $\hat{x}(\bS) \doteq w$ for any $w\in W_m\setminus \bigcup_{i \in [n]}\bV_i$. Then for all $i$, $h_{\bV_i}(\hat{x}(\bS)) = 0$ and hence $F_\bS(\hat{x}(\bS)) = 0$. This means that $\hat{x}(\bS)$ is a minimizer of $F_\bS$ and $F_{D_m}(\hat{x}(\bS)) - F^* = 1/2$.

Now, exactly as in Thm.~\ref{thm:l2-max}, we can conclude that $\bigcup_{i \in [n]}\bV_i = W_m$ with probability at most $$\lp1-2^{-n}\rp^{m} \leq e^{-2^{-n} \cdot m} \leq e^{-1} < \fr{2}.$$
\end{proof} \fi
This lower bound holds for every $m$. This implies that the sample complexity of $1/2$-optimizing $\F(\B_2^2,1)$ over $\B_2^2$ is infinite.
\fi
\section{Discussion}
Our work points out to substantial limitations of the classic approach to understanding and analysis of generalization in the context of general SCO.  Further, it implies that in order to understand how well solutions produced by an optimization algorithm generalize, it is necessary to examine the optimization algorithm itself. This is a challenging task that we still have relatively few tools to address. Yet such understanding is also crucial for developing theory to guide the design of optimization algorithms that are used in machine learning applications.

One way to bypass our lower bounds is to use additional structural assumptions. For example, for generalized linear regression problems uniform convergence gives nearly optimal bounds on sample complexity \cite{KakadeST:08}. One natural question is whether there exist more general classes of functions that capture most of the practically relevant SCO problems and enjoy dimension-independent (or, scaling as $\log d$) uniform convergence bounds.

An alternative approach is to bypass uniform convergence (and possibly also ERM) altogether. Among a large number of techniques that have been developed for ensuring generalization, the most general ones are based on notions of stability \cite{BousquettE02,ShwartzSSS10}. However, known analyses based on stability often do not provide the strongest known generalization guarantees (\eg high probability bounds require very strong assumptions). Another issue is that we lack general algorithmic tools for ensuring stability of the output. Therefore many open problems remain and significant progress is required to obtain a more comprehensive understanding of this approach. Some encouraging new developments in this area are the use of notions of stability derived from differential privacy \cite{DworkFHPRR14:arxiv,DworkFHPRR15:arxiv,BassilyNSSSU16} and the use of techniques for analysis of convergence of convex optimization algorithms for proving stability \cite{HardtRS16}.

\section*{Acknowledgements}
I am grateful to Ken Clarkson, Sasha Rakhlin and Thomas Steinke for discussions and insightful comments related to this work.

\iffull
\bibliographystyle{alpha}	
\bibliography{vf-allrefs-central}
\else
\bibliographystyle{abbrv}	
\small{
\bibliography{vf-allrefs-central}
}
\fi
\end{document}